\newcommand{\POCAlg}{\texttt{POCAII}\xspace}
\newtheorem{theorem}{Theorem}[section]
\newtheorem{lemma}[theorem]{Lemma}
\theoremstyle{definition}
\newtheorem{assumption}[theorem]{Assumption}
\newtheorem{proposition}[theorem]{Proposition}
\newcommand{\up}{\vrule height 1.5em depth 0em width 0pt}
\newcommand{\down}{\vrule height 0em depth 0.625em width 0pt}
\title{\POCAlg: Parameter Optimization with Conscious Allocation using Iterative Intelligence
}
\author{
  Joshua Inman \\
  Department of Statistics,\\
  Rice University, \\
  Houston, TX\\
  \texttt{joshua.inman@rice.edu} \\
   \And
  Tanmay Khandait \\
  School of Computing and Augmented Intelligence, \\
  Arizona State University \\
  Tempe, AZ\\
  \texttt{tkhandai@asu.edu} \\
  \AND
    Lalitha Sankar \\
    School of Electrical, Computer and Energy Engineering \\
  Arizona State University \\
  Tempe, AZ\\
  \texttt{lsankar@asu.edu} \\
  \And
  Giulia Pedrielli \\
  School of Computing and Augmented Intelligence, \\
  Arizona State University \\
  Tempe, AZ\\
  \texttt{gpedriel@asu.edu} \\
}
\begin{document}
\maketitle

\begin{abstract}
In this paper we propose for the first time the hyperparameter optimization (HPO) algorithm \POCAlg. \POCAlg differs from the Hyperband and Successive Halving literature by explicitly separating the search and evaluation phases and utilizing principled approaches to exploration and exploitation principles during both phases. Such distinction results in a highly  flexible scheme for managing a hyperparameter optimization budget by focusing on search (i.e., generating competing configurations) towards the start of the HPO process while increasing the evaluation effort as the HPO comes to an end.

\POCAlg was compared to state of the art approaches SMAC, BOHB and DEHB. Our algorithm shows superior performance in low-budget hyperparameter optimization regimes. Since many practitioners do not have exhaustive resources to assign to HPO, it has wide applications to real-world problems. Moreover, the empirical evidence showed how \POCAlg demonstrates higher robustness and lower variance in the results. This is again very important when considering realistic scenarios with extremely expensive models to train.
\end{abstract}

\keywords{Hyperparameter optimization \and Optimal budget allocation \and Bayesian optimization}

\section{Introduction}\label{sec:Intro}
Large data-intensive models such as Convolutional Neural Networks (CNNs), Autoencoders, and Deep Belief Networks (DBNs) have become widespread across different domains, including computer vision (CV), natural language processing (NLP), robotics, in a plethora of applications spanning medical care to driving and portfolio management~\citep{zoller2021benchmark}. These models are highly configurable and require careful selection of hyperparameter values (e.g., learning rate, batch size, dropout rate, number of epochs) to achieve optimal performance, measured by metrics such as validation accuracy, F1-score, area under the ROC curve (AUC-ROC), or minimization of validation loss \citep{Feurer2019review, bischl2023hyperparameter}.
This problem is challenging because the validation loss is not known in closed form and therefore it requires training the model to be evaluated given a specific parametrization, which is a knowingly a time demanding process. As a statement to such  complexity, a rich literature has been developed for the algorithmic, sequential generation of candidate hyperparameters, referred to as Hyperparameter optimization (HPO)~\citep{Feurer2019review, bischl2023hyperparameter, raiaan2024hpo}. 

Hyperparameter optimization (HPO) methods aim to automate the search for optimal configurations, with approaches broadly differing in how they balance exploration-exploitation trade-offs and manage computational costs. While the literature encompasses diverse strategies (e.g., evolutionary algorithms, gradient-based methods, meta-learning), two classes of approaches have particularly gained success due to their scalability and empirical success: (1) Bayesian Optimization (BO), and (2) multi-fidelity resource allocation \citep{Feurer2019review}. These approaches address core challenges in HPO, but introduce critical limitations that motivate our work.

Bayesian Optimization (BO) frameworks treat HPO as a surrogate-guided search problem, and is a promising approach since it can treat the loss function as a black box with little to no assumptions on the function properties~\citep{snoek2012practical}. By modeling the loss function using surrogate models, BO iteratively selects hyperparameters. Algorithms such as \cite{snoek2012practical}, \cite{hutter2011sequential}, \cite{snoek2015scalable} demonstrate the flexibility of BO, differing primarily in their choice of surrogate model and acquisition function design. However, BO’s reliance on fast feedback—inexpensive evaluations of the objective function—limits its scalability to computationally intensive tasks (e.g., training deep networks with many epochs), where even a single evaluation is prohibitively costly.

Multi-fidelity resource allocation strategies, such as Successive Halving (SH) \citep{jamieson2016non} and Hyperband (HB) \citep{li2018hyperband}, address this limitation by dynamically allocating resources across configurations. These methods exploit low-fidelity approximations (e.g., training with fewer epochs, subsets of data, or smaller models) to cheaply eliminate poor candidates, reserving full budgets for top performers. However, their reliance on heuristic elimination schedules and assumptions about fidelity-performance correlations can lead to suboptimal allocations, particularly when early performance metrics poorly predict final model quality \citep{raiaan2024hpo}.

Bayesian optimization with hyperband (BOHB)~\citep{falkner2018bohb} bridges the gap between BO and multi-fidelity methods. By integrating Bayesian optimization with Hyperband’s resource allocation, BOHB uses surrogate models to guide the search for promising configurations while dynamically adapting fidelity levels (e.g., epoch budgets). This synergy allows BOHB to outperform standalone BO in low-budget scenarios and match its performance in high-budget settings. Despite its advancements, BOHB inherits unresolved challenges: (1) fidelity schedules (e.g., epoch increments or data subset sizes) are predefined rather than learned, limiting adaptability to problem-specific dynamics; (2) surrogate models may fail to capture complex relationships between fidelity levels and performance, leading to misguided resource allocation; and (3) the interaction between configuration sampling and fidelity selection remains unexplored, leading to redundant evaluations or premature convergence when fidelity correlations are weak.

To address the aforementioned challenges, we introduced the Parameter Optimization with Conscious Allocation (POCA) in our preliminary work in~\cite{inman2023parameter}. POCA improves existing methods by designing a novel optimization budget allocation scheme by shortening certain Hyperbands by reducing the maximum budget a configuration in that Hyperband can receive. Specifically, POCA allocates fewer resources to configurations selected early in the optimization process, i.e., when the number of configurations is small and only noisy loss evaluations are available. This allows more budget to be directed toward promising configurations discovered later in the search, while exploring a large number of cheap, configurations early on. Additionally, POCA emphasizes exploration during the early stages, ensuring broader coverage of the hyperparameter space. Our initial experiments demonstrated that POCA outperforms BOHB during small and medium-budget optimizations.~\citep{inman2023parameter}.

\vspace{3pt}
\begin{center}
\fbox{\parbox{0.9\linewidth}{\textbf{Example. }Consider a case where the total budget available is $B=600$, and the practioner wants no individual configuration to receive a budget larger than $8$ with available budgets of $1, 2, 4, \text{and } 8$. Hyperband or BOHB would run $5$ hyperbands and would sample a total of 100 configurations. On the other hand, POCA would run $6$ hyperbands, saving resources by reducing the budget assigned to early Hyperbands, allowing POCA to test $138$ configurations in total.}}    
\end{center}
\vspace{3pt}

Our analysis identified three critical limitations in the search dynamics of the original POCA framework, which hinder its ability to efficiently identify optimal hyperparameter configurations. First, the algorithm systematically underallocates training budgets to high-performing configurations discovered early in the hyperparameter optimization (HPO) process. Since early-stage discoveries are deprioritized despite their potential, this leads to early discarding of promising candidates, resulting in suboptimal final selections. Second, POCA exhibits excessive exploitation by allocating substantial resources to configurations whose validation performance has plateaued, thereby wasting computational effort on uninformative candidates. Third, the ranking mechanism used to construct the Tree-structured Parzen Estimator (TPE) introduces bias by directly comparing validation metrics of configurations trained to different budgets. Specifically, configurations trained longer are inherently favored over those trained with smaller budgets, even if the latter exhibit superior potential. This bias creates a feedback loop where optimal configurations trained with limited budgets are deprioritized and excluded from further resource allocation. Collectively, these flaws negatively affect the efficiency and fairness of the HPO process, as they imbalance exploration-exploitation trade-offs, misallocate computational resources, and introduce ranking biases. In this work, we present \POCAlg, which significantly addresses these limitations through systematic enhancements to the original POCA framework. Thus, significant improvements to the original version of \POCAlg are available, and are presented in this paper.

This work contribution is twofold:

\begin{itemize}
    \item[\textbf{C1}] \textbf{A Novel Structure for Configuration Search and Evaluation:} We introduce a new version of \POCAlg that departs from the original, rigid Hyperband structure. Due to that structure, promising configurations sampled early on in POCA's HPO process would not be evaluated further, preventing the algorithm from identifying the strongest configuration found. Instead, the method alternates between two phases: configuration search and configuration evaluation. During the search phase, hyperparameter configurations are sampled from the search space and trained to a small budget (i.e. epochs). During the evaluation phase, promising configurations previously sampled in a search phase are selected for further evaluation, enabling configurations found early on to be trained on a larger budget. However, this approach retains the original version's philosophy of prioritizing search early in the optimization process and transitioning to evaluation later by gradually increasing the budget of evaluation phases.
    
    \item[\textbf{C2}] \textbf{Resource Allocation Based on Configuration Performance:} In the evaluation phase, \POCAlg re-selects configurations based on both their potential to be the best configuration and whether their validation loss is still improving as additional budget is assigned. This prevents \POCAlg from wasting resources on configurations whose validation metric has plateaued, unlike the original POCA algorithm. \POCAlg employs simple time-series models for this purpose, allowing better comparisons between configurations of different budgets by considering the trend of a configuration as it receives more resources.
\end{itemize}

\section{Literature Review}\label{sec:lit_rev}
Various algorithms have been proposed for hyperparameter optimization (HPO), attracting significant research and commercial interest~\citep{baratchi2024automated}. The algorithms addressing the HPO problem can broadly be categorized into model-free approaches, Bayesian optimization (BO) approaches, multifidelity optimization approaches, and gradient descent-based approaches. The remainder of this section provides a brief overview of the existing work in this area.

\subsection{Model-free approaches}\label{sec:modelFree}

Model-free hyperparameter optimization (HPO) methods generate candidate configurations without relying on a surrogate model of the validation loss. Traditional approaches include grid search, random search, evolutionary algorithms, and gradient-based optimization, etc. While there is a rich literature in the context of black box optimization for these techniques, we focus on implementations that have been proposed specifically within the HPO context.  Grid search, the simplest method, exhaustively evaluates hyperparameter combinations on a predefined grid with evenly spaced values. However, its computational inefficiency in high-dimensional spaces—a consequence of the curse of dimensionality—limits its practicality \citep{bergstra2012random}. To address this, adaptive grid search methods refine resolution dynamically, such as the coarse-to-fine strategy proposed by \citet{larochelle2007randommultires}, which iteratively zooms into promising regions. Despite such adaptations, rigid grid structures remain less popular than flexible sampling strategies in modern HPO.

An alternative to defining a rigid grid-like neighborhood structure is to design sampling distributions in the continuous (or mixed) hyperparameter space. Within this family, random search samples hyperparameters from user-defined distributions (e.g., uniform or log-uniform) and often outperforms grid search in high-dimensional settings, particularly when only a subset of hyperparameters significantly impacts performance \citep{bergstra2011algorithms}. Extensions leverage quasi-random sequences like Sobol or Halton to achieve low-discrepancy sampling, improving coverage of the search space compared to uniform random sampling \citep{bergstra2012random}. Frameworks such as Hyperopt \citep{bergstra2013hyperopt} employ these strategies, though they differ from model-based approaches like SMAC \citep{lindauer2022smac3}, which combine sampling with surrogate models. Gradient-based methods, while widely used for training neural networks, face challenges in HPO due to the computational expense of computing hypergradients (derivatives of the validation loss with respect to hyperparameters). Recent advances, such as implicit differentiation through training trajectories \citep{lorraine2020optimizing} and nested automatic differentiation techniques \citep{franceschi2017forward, maclaurin2015gradient}, enable scalable gradient-based HPO for specific model classes, including deep neural networks.

Evolutionary Strategies (ES) have been adapted to HPO tasks by directly targeting challenges like mixed search spaces, computational budgets, and noisy evaluations. One of the most recent works HPO-specific ES is Regularized Evolution \citep{real2019regularized}, which introduces aging mechanisms to discard outdated configurations, enabling robust optimization in high-dimensional spaces (e.g., neural architecture search). This approach outperformed random search and Bayesian optimization on benchmarks like NASBench-101, highlighting ES’s suitability for structured HPO tasks. For computationally expensive evaluations, \cite{awad2021dehb} combines evolutionary operators with successive halving to dynamically allocate resources, reducing wall-clock time while maintaining competitive performance. While ES avoids the gradient-computation bottlenecks of model-based methods, its population-based nature still incurs overhead compared to simpler strategies like random search, particularly when training costs are prohibitive \citep{raiaan2024hpo}.

\subsection{Bayesian Optimization approaches}\label{sec:BOmethod}

Bayesian optimization (BO) has emerged as a dominant paradigm for hyperparameter optimization (HPO), particularly in computationally expensive settings \citep{Feurer2019review, raiaan2024hpo}. Unlike model-free methods, BO employs surrogate models, such as Gaussian processes (GPs) or random forests, to approximate the validation loss landscape and strategically select hyperparameters that balance exploration and exploitation. Work by \cite{snoek2012practical} established BO’s superiority over grid and random search for HPO, demonstrating its ability to optimize neural network hyperparameters with minimal evaluations. Their GP-based framework, implemented in the Spearmint toolbox, became a cornerstone of automated HPO, particularly for black-box functions where gradient information is unavailable or costly to compute.

Scalability limitations of GPs in high-dimensional spaces due to cubic computational complexity gave rise to methods tailored to HPO. For instance, \cite{hutter2011sequential} introduced SMAC, replacing GPs with random forests to handle discrete and conditional hyperparameters, while \cite{bergstra2013hyperopt} proposed Tree-structured Parzen Estimators (TPE) in Hyperopt for asynchronous and scalable BO. These broadened BO’s applicability to complex HPO workflows, including pipeline configuration and resource-constrained optimization. To further reduce wall-clock time, multi-fidelity techniques such as BOHB \citep{falkner2018bohb} integrated BO with Hyperband, dynamically allocating resources across configurations based on low-fidelity approximations (e.g., training on subsets of data). Parallelization strategies, including batch acquisition via local penalization \citep{gonzalez2016batch}, enabled distributed HPO, making BO feasible for industrial-scale tasks.

Recent advances extend BO to structured and high-dimensional HPO problems, such as neural architecture search (NAS) and conditional parameter spaces. Trust-region BO (TurBO) \citep{eriksson2019scalable} partitions high-dimensional spaces into local regions for efficient optimization, while methods like COMBO \citep{oh2018bock} employ graph-based kernels to handle categorical and combinatorial hyperparameters. These extensions lie at the base of modern AutoML frameworks, including Auto-sklearn \citep{feurer2019autosklearn} and Google Vizier \citep{googlevizier}, which automate HPO and pipeline configuration at scale. Industrial adoption highlights BO’s practicality: Vizier, for example, supports thousands of concurrent trials with multi-objective and constrained optimization \citep{akiba2019optuna}.

Despite its successes, BO faces challenges in HPO contexts, including sensitivity to surrogate model choice, degraded performance in high dimensions ($>100$ hyperparameters), and cold-start reliance on initial random evaluations. Hybrid approaches, such as coupling BO with meta-learning \citep{perrone2018scalable} or neural surrogates \citep{springenberg2016bayesian}, aim to mitigate these limitations, though scalability and generalizability are the major bottlenecks.

\subsection{Multifidelity approaches}\label{sec:MFidOpt}

Multi-fidelity hyperparameter optimization (HPO) methods address computational bottlenecks by strategically allocating resources to evaluate configurations at varying fidelity levels, such as training epochs, dataset subsets, or model complexity. These approaches prioritize promising configurations early, discarding underperformers before they incur high computational costs. By balancing exploration (testing diverse configurations) and exploitation (refining top candidates), multi-fidelity HPO achieves scalability in high-dimensional search spaces~\citep{klein2017fast}.

Foundational algorithms like Successive Halving (SH)~\citep{jamieson2016non} and Hyperband (HB~\citep{li2018hyperband} established the paradigm of iterative resource allocation. SH eliminates the worst-performing half of configurations at increasing budgets, while Hyperband automates this process by running multiple SH ``brackets'' with dynamically adjusted budgets. For example, Hyperband evaluates hundreds of configurations at low fidelity (e.g., 1 epoch) and progressively allocates full training budgets to survivors, achieving robustness without manual tuning. This framework has become a benchmark for resource-constrained HPO, particularly in low-budget scenarios~\citep{li2018hyperband}.

Model-based extensions enhance these foundations. BOHB~\citep{falkner2018bohb} integrates Bayesian optimization (BO) with Hyperband, replacing random search with Tree-structured Parzen Estimators (TPE) to propose high-potential configurations. This hybrid retains Hyperband’s efficiency while leveraging probabilistic models to focus on promising regions of the search space. Similarly, FABOLAS~\citep{klein2017fast} uses dataset size as a fidelity parameter, modeling validation loss as a function of both hyperparameters and data subsets to extrapolate performance from small samples. These innovations demonstrate how surrogate models can refine multi-fidelity resource allocation.

Scalability and parallelism are addressed by methods like ASHA (Asynchronous Successive Halving)~\citep{li2020system} and DEHB (Differential Evolution Hyperband)~\citep{awad2021dehb}. ASHA parallelizes SH by asynchronously discarding underperforming trials, enabling distributed HPO across clusters. DEHB combines evolutionary strategies with Hyperband, using mutation and crossover to explore mixed categorical-continuous search spaces common in neural architecture search (NAS). These methods highlight the adaptability of multi-fidelity frameworks to diverse computational environments and problem structures.

Recent advances focus on automating fidelity adaptation and reducing wasted computation. HyperJump~\citep{mendes2023hyperjump} trains neural networks to predict early-stopping points for configurations, terminating trials unlikely to improve, while MFES-HB~\citep{li2021mfes} meta-learns fidelity schedules across tasks to minimize cold-start overhead. Such innovations address challenges like non-stationary loss landscapes and task-specific tuning, pushing multi-fidelity HPO toward greater autonomy.

\section{Problem Formulation}\label{sec:probForm}

We consider the problem of finding a set of hyperparameter configurations $\boldsymbol{\lambda}^*$ for a given machine learning model $\mathcal{A}$ that minimizes a predefined loss function $\mathcal{L}$, while satisfying computational constraints. Specifically, given a dataset $D = \left\lbrace \left(x_i,y_i\right)\right\rbrace^{N}_{i=1}$ with feature vector $x_i$ and labels $y_i$, a machine learning model $\mathcal{A}$ with a set of hyperparameters $\Lambda$ ($\Lambda$ can be generally formed by continuous, categorical and discrete variables), a validation set $D_{\mbox{\tiny{val}}}$ used to estimate the model performance, and a loss function $\mathcal{L}:\Lambda \times D_{\mbox{\tiny{val}}}\rightarrow \mathbb{R}$, we want to solve the following:

\begin{align}    \boldsymbol{\lambda}^*\in\arg\inf_{\lambda\in\Lambda}\mathcal{L}\left(\mathcal{A}\left(\lambda\right),D_{\mbox{\tiny{val}}}\right)\label{eqn::autoMLprob}.
\end{align}
In equation~\eqref{eqn::autoMLprob}, $\mathcal{A}\left(\boldsymbol{\lambda}\right) = \sup_{\beta \in \mathbb{R}^+}\mathcal{A}\left(\boldsymbol{\lambda}, \beta\right)$ is the model trained under hyperparameters $\boldsymbol{\lambda}$ (a hyperparameter solution, referred to as \textit{configuration}), under a large budget regime (i.e., a budget at which the model achieves its minimum validation loss). In particular, $\beta$ represents any measurable and non-decreasing resource associated with training. The budget $\beta$ can be expressed in terms of epochs, training time, or memory use. 
The validation loss is $\mathcal{L}\left(\mathcal{A}\left(\lambda\right),D_{\mbox{\tiny{val}}}\right)$. Generally, we cannot evaluate the model true loss since validation resources are typically scarce. In fact, letting $\beta^{\boldsymbol{\lambda}}$ be the \textit{total} budget assigned to configuration $\boldsymbol{\lambda}$, we desire the optimization in equation~\eqref{eqn::autoMLprob} to satisfy a further budget constraint $\sum_{\boldsymbol{\lambda} \in \Lambda} \beta^{\boldsymbol{\lambda}} \le B$, where $B$ is the total budget available. 

In this work's experimentation (Section \ref{sec:Results}), we use epochs for the budget and (negative) validation accuracy for the validation loss.

\section{\POCAlg Method}\label{sec::algoMethodSection} 
The \POCAlg algorithm alternates between two phases, the \textbf{search}, and the \textbf{evaluation} phase until the user-defined budget $B$ (i.e. total number of epochs) is exhausted. Figure~\ref{fig:flowchartPOCAII} shows the \POCAlg flowchart. At iteration $k$, the \textbf{search} phase (Section \ref{sec:search}) takes as input the space of feasible configurations $\Lambda \setminus \Sigma_{k-1}$ (here, $\Sigma_{k-1}$ is the set of configurations sampled up to iteration $k$), and it generates $n_{\mbox{\tiny{S}}}$ candidate configurations $\left\lbrace\boldsymbol{\lambda}_{i}\right\rbrace^{(k-1)n_{\mbox{\tiny{S}}} + n_{\mbox{\tiny{S}}}}_{i=(k-1)n_{\mbox{\tiny{S}}} + 1}$, where $n_{\mbox{\tiny{S}}}$ is a user input and $(k-1)n_{\mbox{\tiny{S}}}$ is the number of configurations sampled in previous iterations. Hence, we index each configuration we sample from $i=1,\ldots,kn_{\mbox{\tiny{S}}}$. These configurations are generated by means of a sampling scheme, designed to balance exploration and exploitation, which utilizes a Tree Parzen Estimator (TPE) (section~\ref{sec:backGroundPOCA}) and uniform random sampling. All the generated configurations are then evaluated by training the associated learning model $\mathcal{A}\left(\lambda_{i}\right)$ using a budget of $\delta$, where $\delta$, a user-defined parameter, is the minimum budget we can use to train a configuration. $\delta$ may be a function of the iteration or the budget expended so that the minimum budget configurations receive grows throughout the optimization process. As a result of the search phase, the set of candidate configurations $\Sigma_{k} \subseteq \Lambda$ is updated, i.e., $\Sigma_{k} \leftarrow \Sigma_{k-1} \cup \{{\lambda}_{i}\}_{i=(k-1)n_{\mbox{\tiny{S}}}+1}^{(k-1)n_{\mbox{\tiny{S}}} +n_{\mbox{\tiny{S}}}}$. Each configuration is saved with the associated loss value after iteration $k$, which we call $\mathcal{L}^{ik}$ for configuration $\lambda_i$. In addition, each configuration's validation loss is measured several times prior to budget $\delta$ so that a sequence of validation losses $\{\mathcal{L}^{i}_j\}_{j=\frac{\delta}{m}}^{\delta}$, where $\frac{\delta}{m}$ is the budget interval at which the validation loss is measured. (Hence, we use $\mathcal{L}^{ik}$ to be the validation loss at the largest budget $\lambda_i$ received through iteration $k$, and $\mathcal{L}^i_j$ to be the validation loss of $\lambda_i$ after receiving a budget of precisely $j$). Finally, we also save the learning models $A(\lambda_i)_k$ so that training may be resumed from the budget $\lambda_i$ was trained to at the end of iteration $k$.

The \textbf{evaluation} phase (Section \ref{sec:eval}) uses information from configurations previously generated in the search phase to identify and evaluate promising configurations by allocating additional resources (i.e., training epochs) to a subset of configurations. In this regard, the evaluation focuses on the loss sequences $\{\mathcal{L}^{ik}_{j}\}_{j=\frac{\delta}{m}}^{\beta^{ik}}$, where $\beta^{ik}$ is the total budget $\lambda_i$ has received at this point in iteration $k$. As previously mentioned each sequence is initialized at $\beta^{ik}=\delta$. The phase is initialized by fitting a time series model(in this version of \POCAlg, we use an ARIMA) to the loss sequence available for each configuration in the set $\Sigma_k$. Then, for each $\lambda \in \Sigma_k$, a probability of being selected for further training $p_{\lambda_{i}}$ is assigned based on the configuration's expected improvement over the current best configuration $\lambda^*_k$, such that $\sum_{\lambda \in \Sigma_k} p_\lambda = 1$. Then, by sampling $k$ times from this probability distribution, $k$ configurations are selected and trained with an additional budget of $\delta$. After training commences, the TPE used in the search phase is re-fit to include the new data, and POCA advances to the next Search Phase.

\begin{figure}
\centering
\includegraphics[width=0.8\textwidth]
{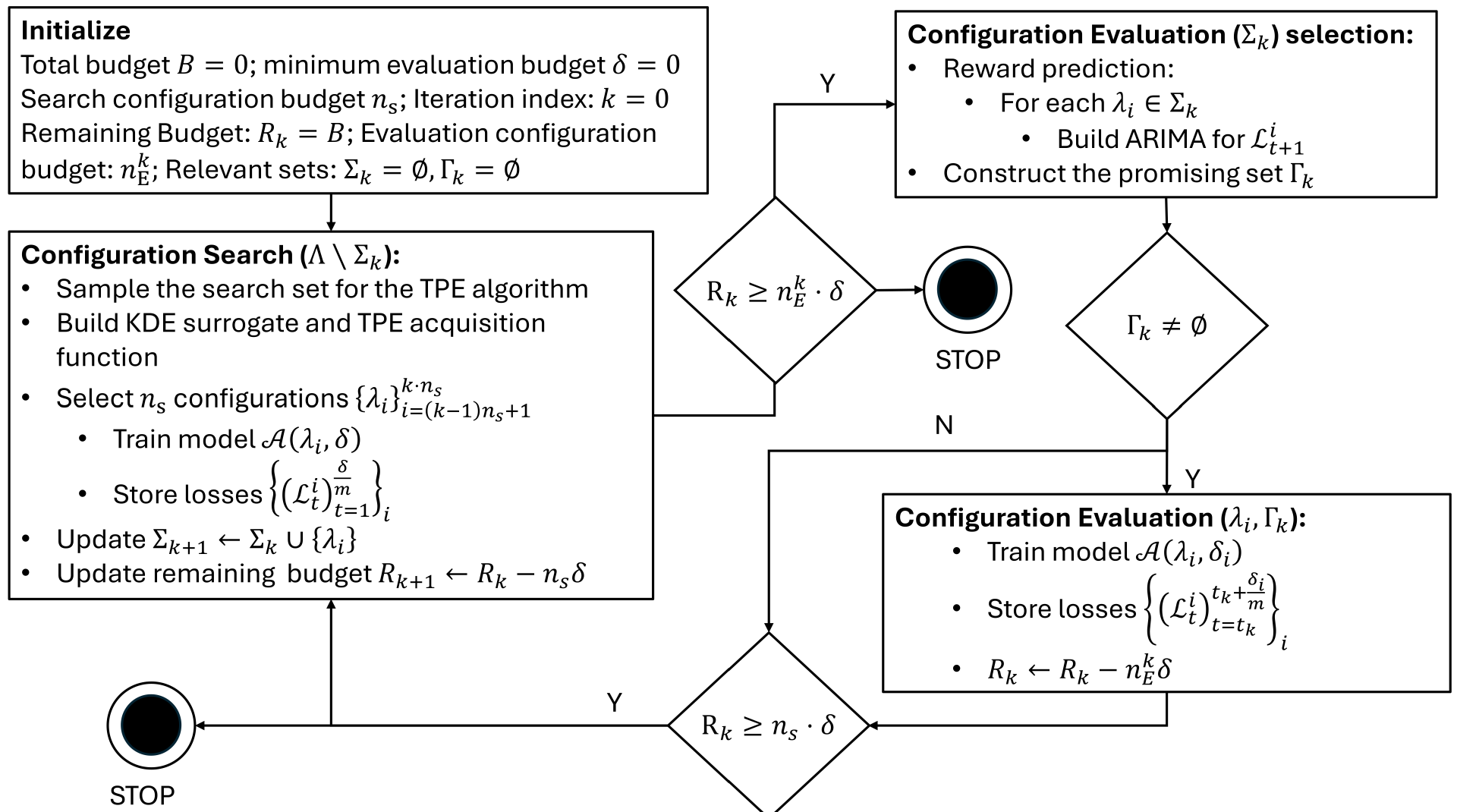}
\caption{Algorithm flowchart of \POCAlg. When the algorithm stops it first uses the remainder epochs allocating them proportionally to the response improvement predicted for each configuration. Finally the configuration with the best associated loss value is returned as incumbent.}
\label{fig:flowchartPOCAII}
\end{figure}

Section \ref{sec:backGroundPOCA} explains the relevant background,i.e., non-original methods that \POCAlg phases require. The Search Phase is presented in Section~\ref{sec:search}, while the Evaluation Phase in Section \ref{sec:eval}. Section~\ref{sec:implement} provides implementation considerations to maximize the experimental performance of the proposed approach.

\subsection{Background}\label{sec:backGroundPOCA}
This section provides the foundational background for the \POCAlg algorithm. We first discuss the Tree-structured Parzen Estimators (TPE), which POCA uses for sampling promising configurations. Then, the section concludes with a discussion on Hyperband, primarily used to demonstrate the asymptotic guarantees we present in Section~\ref{sec::theory}.

\paragraph{Tree Parzen Estimation. }  The Tree Parzen Estimator (TPE) is a type of Bayesian optimizer \citep{bergstra2011algorithms}, which works by iteratively sampling a subset of locations from the feasible region and evaluating an acquisition function for each candidate within the subset. 

Specifically, a TPE iterate starts by sampling uniformly at random from the set of feasible configurations $\Lambda \setminus \Sigma_{k}$, a subset of $n_{\mbox{\tiny{TPE}}}$ points. We refer to this subset as $S^{\mbox{\tiny{TPE}}}_k$. TPE ranks the configurations $\lambda \in S^{\mbox{\tiny{TPE}}}_k$ based on their probability of improvement $P\left(\mathcal{L}\left(\lambda\right)\le \mathcal{L}^*_k\right)$, where $\mathcal{L}^*_k = \min_i \mathcal{L}^{ik}$, and $\lambda_k^{*}$. Rather than estimate $\mathcal{L}(\lambda)$ directly, the TPE uses kernel density estimators to construct a separate acquisition function equivalent in rank to probability of improvement. 

To use this alternative acquisition function, a TPE first splits the data $\Sigma_{k-1}$  into ``good'' and ``bad'' configurations $\Sigma^{\ell}_{k}, \Sigma^{g}_{k}$, respectively. (Note, $|\Sigma_{k-1}|$ must be sufficiently large; in particular, $|\Sigma_{k-1}| > d$, where $d$ is the number of hyperparameters (the dimension of $\Lambda$). Specifically, $\Sigma^{\ell}_{k}=\{\lambda^{i}:\lambda^{i}\in\Sigma_k,\mathcal{L}^{i}_k \le q^{\mathcal{L}}_{\gamma} \}$, where $q^{\mathcal{L}}_{\gamma}$ is the $\gamma$-percentile loss within $\Sigma_k$. Similarly, $\Sigma^{g}_{k}=\{\lambda^{i}:\lambda^{i}\in\Sigma_k,\mathcal{L}^{i}_k > q^{\mathcal{L}}_{\gamma} \}$. 

A kernel density estimator (KDE) builds an estimate for the posterior $p(\lambda | \Sigma^{\ell}_k), \lambda\in \Lambda\setminus \Sigma_k$, using an uninformative prior $p_0(\lambda)$ (which is commonly a uniform prior), which encourages exploration. Here, rather than creating a posterior $p(\lambda | \mathcal{L}, \Sigma_k^\ell)$ for each $\mathcal{L} \le q_{\gamma}^{\mathcal{L}}$, the TPE makes the critical assumption that $p(\lambda | \mathcal{L}, \Sigma_k^\ell) = p(\lambda | \Sigma_k^\ell)$, so only one posterior distribution is necessary. Letting $n^{\ell}_k = |\Sigma^{\ell}_k|$, following the KDE scheme, a natural estimate for the posterior probability of a location in $S^{\mbox{\tiny{TPE}}}_k$ to be a poor configuration is:
\begin{equation}
    p(\lambda | \Sigma^{\ell}_k; \upsilon) = \frac{1}{n^{\ell}_k+1}p_0(\lambda) + \sum_{\lambda_i \in \Sigma^{\ell}_k} \frac{1}{n^{\ell}_k+1} \kappa(\lambda_i, \lambda ; \upsilon),\label{eqn::badLikely}
\end{equation}

where $\kappa$ is kernel function of choice with bandwidth hyperparameter $\upsilon$, and $\lambda_i \in \Sigma^{\ell}_k$ is a sampled configuration within the set $\Sigma^{\ell}_k$. 

The same posterior density can be derived for $\lambda\in\Lambda$ considering losses that satisfy $\mathcal{L} > q^{\mathcal{L}}_{\gamma}$. In this case, we obtain the following posterior:

\begin{equation}
    p(\lambda | \Sigma^{g}_k; \upsilon) = \frac{1}{n^{g}_k+1}p_0(\lambda) + \sum_{\lambda_i \in \Sigma^{g}_k} \frac{1}{n^{g}_k+1} \kappa(\lambda_i, \lambda ; \upsilon).\label{eqn::goodLikely}
\end{equation}

Then, for a sampled sampling set $S^{\mbox{\tiny{TPE}}}_k$, the TPE acquisition function is calculated as the likelihood ratio:

\begin{equation}
    r(\lambda | \Sigma_k;\upsilon) = \frac{p(\lambda | \Sigma^{\ell}_k;\upsilon)}{p(\lambda|\Sigma^{g}_k;\upsilon)}.\label{eqn::likelrati}
\end{equation}

Finally, the configuration within the candidate set $S^{\mbox{\tiny{TPE}}}_k$ is selected which satisfies:

\begin{equation}
    \lambda \in \arg\max_{\lambda\in S^{\mbox{\tiny{TPE}}}_k} r(\lambda | \Sigma_k;\upsilon) \label{eqn::TPEselect}
\end{equation}
Notice that, since $S^{\mbox{\tiny{TPE}}}_k$ has a finite size, the computational demand to solve the problem in~\eqref{eqn::TPEselect} is typically contained (and controllable by means of $n_{\mbox{\tiny{TPE}}}$). Importantly, \citep{Song2022AGR} shows that this is equivalent (in rank) to maximizing the probability of improvement.
\subsection{\POCAlg Search Phase}\label{sec:search}
At iteration $k$ of \POCAlg, the \textbf{Search} Phase generates $n_{\mbox{\tiny{S}}}$ new configurations $\lambda_{i}$, $i= (k-1)n_{\mbox{\tiny{S}}} + 1, \ldots, (k-1)n_{\mbox{\tiny{S}}} +n_{\mbox{\tiny{S}}}$, and trains the corresponding models $\mathcal{A}\left(\lambda_i\right), \forall i$ using a budget $\delta$, thus using a total budget of $B^{s}_{k} = \delta\cdot n_{s}$. Alg.~\ref{alg:config_search_phase} reports the pseudocode for the search phase.

\begin{algorithm}[htbp]
\caption{Search Phase}
\label{alg:config_search_phase}
\begin{algorithmic}[1]
\State \textbf{Input:} Total budget $B$, 
Remaining budget for the entire optimization $R_k$, iteration $k$, number of configurations $n_{\mbox{\tiny{S}}}$, minimum budget $\delta$, set of previously sampled configurations $\Sigma_{k-1}$, minimum random search probability $\epsilon$, dimensionality of the configuration vector $\boldsymbol{\lambda}$, $d$.
\State \textbf{Output:} Set of sampled configurations $\Sigma_k$
\State $i \gets (k-1)n_{\mbox{\tiny{S}}}+1$. 
\While{$i \le k\cdot n_{\mbox{\tiny{S}}}$ \textbf{and} $R > \delta$}
    \State $q = 
 \max\left(1-0.5\frac{R_k}{B}, 1- \epsilon\right)$.
    \State Sample a value $v$ from a uniform r.v. $V \sim \mathcal{U}(0, 1)$
    \If{$|\Sigma_k| \leq d+1$ or $v > q$}
        \State Sample configuration $\lambda_{k}$ uniformly at random;
    \Else
        \State Sample configuration $\lambda_{k}$ using the TPE sampler;
    \EndIf
    \State Train $\mathcal{A}(\lambda_{k})$ to budget $\delta$, and store $(\lambda_{k}, \mathcal{L}_{\delta}\left(\lambda_k\right)))$
    \State $k \gets k + 1$, $R_k \gets R_{k-1} - \delta$;
    \State Set $i=k$, save candidate $\lambda_i=\lambda_k$ and associated loss $\mathcal{L}^{ik}_{\delta} = \mathcal{L}_{\delta}\left(\lambda_k\right)$, update the set of solutions $\Sigma_k \leftarrow \Sigma_{k-1} \bigcup \{\lambda_{k}\}$.
\EndWhile
\State \Return The set of all sampled configurations $\Sigma_k$
\end{algorithmic}
\end{algorithm}

Until $d+1$ configurations have been sampled by \POCAlg, where $d$ is the dimension of an input $\lambda_i\in\Lambda$, sampling is performed uniformly at random since $d+1$ is the minimum sample required to execute a TPE sampler (section~\ref{sec:backGroundPOCA}). Subsequently, the new configuration $\lambda_{i}$ at iteration $k$ will be sampled using the TPE with probability $q_k = \min\{1-\epsilon, 1-0.5\frac{R_k}{B}\}$, where $R_k$ is the remaining budget at iteration $k$ and $\epsilon\in [0, 0.5]$ is a user-defined parameter that ensures random sampling is retained at a minimum rate. When sampling according to the TPE, \POCAlg uses a multivariate KDE to estimate the relevant sampling probabilities in eqn.~\eqref{eqn::badLikely}-\eqref{eqn::goodLikely}, where each dimension of the KDE corresponds to a specific HPO hyperparameter. For continuous and discrete hyperparameter dimensions, the KDEs use a Gaussian kernel: 
\begin{equation}
\kappa_G(\lambda, \lambda_i ; \upsilon_G) = \frac{1}{\sqrt{2\pi \upsilon_G^2}} \exp{\left[-\frac{1}{2\upsilon_G}(\lambda-\lambda_i)^2\right]},
\end{equation}.

For categorical hyperparameters, we use the Aitchison-Aitken kernel:
\begin{equation}
\kappa_{AA}(\lambda, \lambda_i | \upsilon_{AA}) =
\begin{cases}
    1-\upsilon_{AA} & \lambda = \lambda_i \\
    \frac{\upsilon_{AA}}{N_{\mbox{\tiny{C}}} - 1} & \lambda \neq \lambda_i
\end{cases},
\end{equation}
where $N_{\mbox{\tiny{C}}}$ is the number of options the categorical hyperparameter has.

The kernel function utilized for the TPE likelihood ratio is the product of the kernel functions estimated for each dimension. In other words, if $\{\kappa_j(\lambda, \lambda_i | \upsilon_j\}_{j=1}^d$ are the kernel functions for each hyperparameter (which may be either Gaussian or Aitchison-Aitken), the multivariate kernel function for the likelihood ratio calculation is:
\begin{equation}
\kappa(\lambda, \lambda_i | \upsilon_j \text{ } \forall j) = \prod_{j=1}^d \kappa_j(\lambda, \lambda_i | \upsilon_j),
\end{equation}
where $\upsilon_j = \upsilon_G$ if the kernel $\kappa_j$ is Gaussian and $\upsilon_j = \upsilon_{AA}$ if $\kappa_j$ is Aitchison-Aitken. In our experiments, we use Scott's Rule \citep{scotts_rule} to determine $\upsilon$.

We highlight that alternative kernels that work over hybrid spaces may be considered~\cite{deshwal2021bayesian}. In fact, the \POCAlg implementation allows for such integrations to be easily performed.

If the new configuration $\lambda_{i}$ does not qualify for TPE sampling, it is generated uniformly at random from the set $\Lambda\setminus\Sigma_{k-1}$. Notably, $q_k$ increases linearly from $0.5$ to $1-\epsilon$ as $R_k$ approaches $0$. In other words, the TPE is increasingly utilized as the search progresses, thus trusting more the already identified configurations. By setting $\epsilon > 0$, practitioners can control the level of exploration of the domain $\Lambda$. As $\epsilon$ increases, \POCAlg samples uniformly at random more often, increasing exploration.

Notably, \POCAlg was designed with the flexibility to construct a TPE acquisition utilizing loss sequences that are obtained with different number of epochs across different sampled configurations. This is different from the approach BOHB presented in~\citep{falkner2018bohb}, which builds their TPE only using data from configurations that have been trained to a budget $\beta_{\mbox{\tiny{TPE}}}$ such that for all budgets $\beta > \beta_{\mbox{\tiny{TPE}}}$, less than $d+1$ configurations have been trained to budget $\beta$. Thus, BOHB uses only the subset of the data which has been trained to a large budget in their TPE. While this reduces the number of configurations misclassified to be in $\mathcal{D}^{\ell}$ or $\mathcal{D}^g$, it can lead to the TPE re-exploring unpromising regions of $\Lambda$ since configurations from these regions will not receive a large budget. For this reason, \POCAlg includes data from every configuration in $\Sigma_k$.

\subsection{\POCAlg Evaluation Phase}\label{sec:eval}
At each \POCAlg iteration $k$, the evaluation Phase is responsible for re-sampling a maximum of $n^k_{E} = \mathcal{E}\left(k\right)$ configurations from the set $\Sigma_k$. In fact, only solutions that are promising configurations and are predicted to improve can be sampled and, in the case that such criteria is not met by any of the configurations, the associated budget is ``returned'' to the search phase (section~\ref{alg:config_search_phase}). The subset of solutions that meet both of these criteria is referred to as $\Gamma_k$, and it is initialized as the empty set. Note that, in our implementation of \POCAlg, we use $\mathcal{E}\left(k\right)= k$, but other functions can be explored. If selected, a configuration is assigned an additional $\delta$ budget. Hence, in the case all $n^k_{E} = \mathcal{E}\left(k\right)$ configurations are selected, the phase will deplete a maximum $B_{k}^{E} = \delta \cdot n_E^k$ budget. Alg.~\ref{alg:config_eval_phase} shows the evaluation pseudocode.

\begin{algorithm}
\caption{Evaluation Phase}
\label{alg:config_eval_phase}
\begin{algorithmic}[1]
\State \textbf{Input:} Current iteration $k$, set of previously sampled configurations $\Sigma_{k}$, resampling pool $\Gamma_{k-1}$, budget per increment $\delta$, minimum improvement ratio $\alpha$, best configuration $\lambda^*$
\State \textbf{Output:} Updated set of sampled configurations $\Sigma_k$
\State Compute the set of improving configurations $\Gamma_k$.

\If{$\Gamma_k = \emptyset$} 
    \State Run a search phase with $n_{\mbox{\tiny{S}}} =k$ instead.
\Else
    \State Compute re-selection probabilities for promising configurations $p_{\lambda_{i}}$ where $\lambda_{i} \in C$ 

    \While{$R > \delta$ \textbf{and} $|\Gamma_k| > 0$}
        \State Sample a configuration $\lambda \in C$ according to the assigned probabilities $p_{\lambda_{i}}$
        \State Train $\lambda$ with additional budget $\delta$
        \State Update validation metric of $\lambda$, updating $\Sigma_k$, and recalculate its ARIMA predictions
        \If{$\lambda$ is no longer forecasted to improve}
            \State $\Gamma_k \gets \Gamma_k \setminus \lambda$
        \EndIf
        \State $R \gets R - \delta$
    \EndWhile
\EndIf
\State \Return $\Sigma_k$
\end{algorithmic}
\end{algorithm}

As previously hinted, \POCAlg only re-selects configurations that satisfy the following two criteria: first, the configuration, given its current validation performance, cannot be confidently ruled out as a potential \textit{incumbent} (best sampled configuration); and second, the configuration loss has high likelihood to improve with additional budget. \POCAlg is designed to avoid dedicating resources on configurations that are not promising or that will not provide any additional information when an additional budget is assigned. 

To determine which configurations to potentially re-select, a time-series model is fitted to the sequence of loss values associated to each solution in the set $\Sigma_k$, $\left(\mathcal{L}^{i}_{t}\right)_{t=1}^{T^{i}_k}$.In order to derive the discrete time steps from the training results, we elect a minimum delta budget, $m$ such that, each $m$ epochs \POCAlg collects the validation accuracy associated with the configuration being trained. As an example, when a configuration is first sampled, a number of epochs $\delta$ is allocated for training and a number $\delta/m$ of loss values is collected by \POCAlg, resulting in the loss sequence $\left\lbrace \mathcal{L}^{i}_t \right\rbrace^{\delta/m}_{t=1}$. If a configuration is re-selected for evaluation, the associated loss sequence is augmented with an additional $\delta/m$ loss values.

For the purpose of this paper, we use an Auto-Regressive Integrated Moving Average (ARIMA) model is used as the time-series model; however, building more sophisticated models may be a promising area of improvement to \POCAlg in the future. For configuration $\lambda_{i}$, the ARIMA model provides a prediction of the configuration loss at evaluation index $t+\delta/m$, corresponding to a budget allocated of $\left(t\cdot m + \delta\right)$. In particular, we use the predicted loss $\widehat{\mathcal{L}}^{i}_{t+1}$, and the associated variance $\text{Var}\left[\widehat{\mathcal{L}}^{i}_{t+1}\right]$ to predict the performance gain we may get by allocating more budget to a specific configuration.

The ARIMA($p$, $n_{\mbox{\tiny{d}}}$, $q$) model uses $n_{\mbox{\tiny{d}}}$ levels of differencing to transform a non-stationary time series into a stationary one, and then fits an ARMA($p$, $q$) model on the transformed time-series. Letting $\left(\tilde{\mathcal{L}}_t^i\right)$ be the time-series of the loss sequence $\left(\mathcal{L}_t^i\right)$ after differencing, the ARMA($p$, $q$) model is: 

\begin{equation}
    \label{eq: ARMA model}
    \widetilde{\mathcal{L}}_t^i =  \sum_{l=1}^{p} \theta_{l}\widetilde{\mathcal{L}}_{t-l}^i + \sum_{v=1}^{q} \phi_v \epsilon_{t-v} + \epsilon_t, 
\end{equation}
where $\boldsymbol{\theta}$ are the coefficients for the auto-regression, $\boldsymbol{\phi}$ are the coefficients for the moving average component, and $\boldsymbol{\epsilon}$ are the error terms, which are assumed to be i.i.d. with distribution $\mathcal{N}(0, \sigma_\epsilon)$. Then the prediction $h$ time-steps into the future can be written as~\citep{box-time-series}:
\begin{equation}
\widehat{\widetilde{\mathcal{L}}}^{i}_{t+h} = \sum_{\ell=1}^{p} \theta_{\ell} \widetilde{\mathcal{L}}^i_{t+h-\ell} + \sum_{v=1}^{q} \phi_{v} \epsilon_{t+h-v}.\label{eqn::arimaformpred} 
\end{equation}

An alternate formulation for any ARMA($p$, $q$) time-series model is an MA($\infty$) model. Thus, the model in Equation $\ref{eq: ARMA model}$ can be re-written as:

\begin{equation}
    \label{eq: arma-as-ma-inf}
    \widetilde{\mathcal{L}}_i^t = \sum_{j=0}^\infty \psi_j \epsilon_{t-j},
\end{equation}
where $\psi_0 = 1$. Using this alternate version, the variance of the forecasts can be written as:

\begin{equation}
   \label{eqn::arimaformVar} \mbox{Var}\left(\widehat{\widetilde{\mathcal{L}^i}}_{t+h}\right) = \sigma^2_\epsilon(\sum_{j=0}^h \psi_j).
\end{equation}
Undoing the differencing provides $\widehat{\mathcal{L}}^i_{t+h}$ and $\mbox{Var}\left(\widehat{\mathcal{L}^i}_{t+h}\right)$.

Using these predictions, \POCAlg updates $\Gamma_k$, which is initialized to the empty set, by adding any configuration in $\Sigma_k$ that satisfies the following condition over the predicted performance:

\begin{equation*}
    \widehat{\mathcal{L}}^{i}_{t+\delta/m}\ge \alpha\cdot \mathcal{L}^{i}_{t}, \label{eqn::imrpovCondmean}
\end{equation*}

where $\alpha>1$ is a user-defined hyperparameter.
\POCAlg will continue based on the size of the set $\Gamma_k$.

\paragraph{CASE (a): $\Gamma_k = \emptyset$.} In this case, the configuration evaluation phase is replaced by a configuration search phase, which will sample $k$ (rather than $n_{s}$) configurations and assign them each budget $\delta$.

\paragraph{CASE (b): $|\Gamma_k| > 0$.} In this case, the configurations in $\Gamma_k$ are sampled based on the promise of improvement. \POCAlg 
measures each configuration potential for improving on the incumbent at the $k$-th \POCAlg iteration, $\lambda_k^*$. Specifically, for each configuration $\lambda_{i}\in\Gamma_k$, we derive the random response-improvement as:
\begin{equation*}
    \label{Improvement}
    RI_{i}= \max\{\mathcal{L}^{i}_{t + 1}-\mathcal{L}^*_{k}, 0\}.
\end{equation*}
Note that $\mathcal{L}^{i}_{t + 1}$ is unknown and is a random variable, so $RI_{i}$ is also a random variable. We then consider the expected improvement as:
\begin{equation}
    E\left[RI_i\right]=\int^{\infty}_{\mathcal{L}^*_{k}}l\cdot f_{\mathcal{L}^{i}_{t + \delta/m}}\left(l\right)dl,\label{eqn::expImproARIMA}
\end{equation}

where $\mathcal{L}^{i}_{t + \delta/m}\sim\mathcal{N}\left(\widehat{\mathcal{L}}^{i}_{t + \delta/m},\mbox{Var}\left(\mathcal{L}^{i}_{t + \delta/m}\right)\right)$.
In eqn.~\eqref{eqn::expImproARIMA}, the moments of the normal distribution are calculated using the ARIMA derivation in eqn.~\eqref{eqn::arimaformpred}-\eqref{eqn::arimaformVar}. We define the probability that $\lambda_{i}$ is re-selected as:
\begin{equation}
    P_{i} = \frac{E[RI_{i}]}{\sum_{\lambda_h \in \Gamma_k} E[RI_{h}]},\label{eqn::ImprovProb} i=1, \ldots, |\Gamma_k|.
\end{equation}
Thus, the probability of each configuration being selected is based on total proportion of expected response-improvement belonging to that configuration. The more promising a configuration is, the more likely it is to be selected for further sampling, so long as it is still forecasted to improve. In fact, a fixed configuration $\lambda_i$ will be sampled on expectation $P_i\cdot \mathcal{E}\left(k\right)$, with an expected budget increase for training of $\delta\cdot P_i\cdot \mathcal{E}\left(k\right)$. Similarly, a number of configurations $n_{\mbox{\tiny{E}}}^k \le \mathcal{E}\left(k\right)$ will generally be selected at each evaluation phase. 

\vspace{6pt}
\fbox{\parbox{0.9\linewidth}{\textbf{Example. }Consider the case where the total budget available is $B=800$, and consider Hyperband-based algorithms initialized with a minimum budget of $5$ and a maximum budget of $20$, compared to \POCAlg given $\delta=5$ and $n_{\mbox{\tiny{S}}}=5$. 

\textbf{Hyperband} (and algorithms that repeat identical Hyperbands as their budget allocation framework) would test configurations to a budget of $5\text{, }10 \text{, and }20$ epochs if $\eta=2$. Specifically, the approach would execute $5$ hyperbands, leading to a total of $45$ configurations being sampled. 

Differently, \POCAlg will execute $13$ iterations each with configuration search and evaluation phases. The $k$-th configuration evaluation phase will have a budget $\delta k = 5k$ so that its budget begins smaller than a configuration search phase budget ($25$) and ends over twice as large. Hence, \POCAlg will sample $65$ configurations, exploring the search space significantly more than Hyperband given the same budget.}}

\vspace{6pt}

\POCAlg emphasizes the Search Phase at the beginning of the optimization process to explore the largely uncharted configuration space, as shown in the example. This strategy helps identify promising regions for further exploration at the initial stages of the algorithm. As it advances, POCA shifts focus to the Evaluation Phase to refine and evaluate the most promising configurations. This transition is advantageous because the quality of the sampled configurations improves over time as POCA leverages its TPE. Consequently, allocating significant resources to configurations sampled early on, which are less likely to be optimal, is avoided.

\subsection{Implementation Guidelines}
\label{sec:implement}
In practice, we suggest setting $n_{s}$ and $\delta$ such that \POCAlg dominantly uses configuration search at the start of the optimization process and dominantly uses configuration evaluation towards the end of the process. However, practitioners may decide that search is particularly important for their problem, perhaps if the search space is high-dimensional or otherwise large, and may opt to increase the amount of search towards the end of the process. In such scenarios, the practitioner may desire to raise $\epsilon$ to a larger threshold. However, tuning the hyperparameters of any hyperparameter tuning algorithm, including \POCAlg, is immensely computationally expensive. Hence, further insight into the optimal values of these hyperparameters may be possible.

In practice, most practitioners will parallelize their HPO, and the \POCAlg package provides such capabilities. Specifically, during each configuration search or evaluation phase, the evaluation of the models associated with each selected hyperparameter can be run in parallel. To ensure the reproducibility of our experiments, however, they were all run sequentially.

\section{Asymptotic Behavior Analysis}\label{sec::theory}

Hyperparameter optimization (HPO) algorithms face the dual challenge of efficiently allocating computational resources while maintaining exploration guarantees. 

Hyperparameter optimization (HPO) algorithms balance two core challenges: efficiently allocating computational budget to promising configurations while ensuring sufficient exploration of the search space. We contextualize \POCAlg within this framework by establishing its asymptotic equivalence to Hyperband, a state-of-the-art HPO method. To do so, we first formalize the Successive Halving (SH) subroutine central to Hyperband and then demonstrate how \POCAlg replicates its exploration-allocation trade-off.

\paragraph{Successive Halving} Successive Halving (SH) \citep{pmlr-v51-jamieson16} is a multi-stage budget allocation strategy. Given an initial set of configurations, SH iteratively trains them on increasing budgets while discarding underperformers. Formally, for a \emph{halving rate} $\eta \in (0,1)$, each SH bracket operates as follows: \textit{initialize} $n$ configurations with minimum budget $\delta$, \textit{evaluate} all configurations on the current budget, \textit{promote} the top $\eta$ fraction of performers to the next budget tier, increasing their allocation by a factor of $1/\eta$, \textit{repeat} until the total budget $B$ is exhausted. This creates a geometric progression in both budget and elimination. For $\eta=1/3$, for example, each stage triples the budget per configuration while retaining only the top third of candidates. The trade-off lies in choosing $n$ (exploration breadth) versus $\delta$ (initial budget depth): small $\delta$ risks inaccurate early elimination, while small $n$ may overlook optimal regions.

\paragraph{Hyperband} Hyperband \citep{li2018hyperband} generalizes SH by systematically varying $(n, \delta)$ across multiple brackets. Let $\beta_{\max}$ denote the maximum per-configuration budget. Hyperband runs $N_{\text{SH}} = \lfloor \log_{1/\eta} \beta_{\max} \rfloor + 1$ SH brackets, where the $i$-th bracket sets $\delta_i = \beta_{\max} \cdot \eta^{i-1}$ and $n_i = \lfloor B / (\delta_i \cdot \sum_{k=0}^{s_{\max}} \eta^{-k}) \rfloor$ for stages $s_{\max} = \lfloor \log_{1/\eta} (\beta_{\max}/\delta_i) \rfloor$. By cycling through aggressive (high $n$, low $\delta$) and conservative (low $n$, high $\delta$) brackets, Hyperband dynamically balances exploration and exploitation without manual tuning.

\paragraph{\POCAlg} 
We establish the asymptotic equivalence between Hyperband and Progressive Objective Configuration Aggregation (\POCAlg) through structural alignment of their sampling mechanics, budget schedules, and promotion logic. This equivalence holds under mild assumptions about loss convergence and budget allocation to optimal configurations.

\paragraph{Notational Conventions}  
Let $\eta_{\text{SH}} \in (0,1)$ denote Successive Halving's halving rate. Hyperband uses $\eta \triangleq 1/\eta_{\text{SH}} > 1$ as its elimination rate, preserving resource scaling factors while inverting candidate retention ratios. This reciprocal relationship resolves notational ambiguities between the algorithms' original formulations.

\begin{assumption}[Loss Characteristics] \label{assump:loss_charac}
For any configuration $\lambda$ and budget $\beta$:
\begin{enumerate}
\item Square integrability: $\mathbb{E}[\ell_{\lambda,\beta}^2] < \infty$
\item Almost sure convergence: $\mathbb{P}(\lim_{\beta\to\infty} \ell_{\lambda,\beta} = v_\lambda) = 1$
\end{enumerate}
\end{assumption}

These conditions ensure stable comparison of configurations through successive budget amplifications. The almost sure convergence eliminates pathological oscillatory behaviors, while square integrability controls variance accumulation. Practical implementations may relax almost sure convergence to convergence in probability if selection mechanisms exhibit $\eta$-consistency, verifiable via concentration inequalities on empirical loss trajectories.

\begin{assumption}[Budget Growth]\label{asmp:bdgalloc}
The optimal configuration $\lambda^* = \arg\min_\lambda v_\lambda$ receives unbounded resources:
\[
\lim_{B\to\infty} \beta_{\lambda^*}(B) = \infty
\]
where $\beta_{\lambda^*}(B)$ denotes cumulative budget allocated to $\lambda^*$.
\end{assumption}

This prevents permanent starvation of the optimal configuration. Hyperband achieves this through bracket cycling, while \POCAlg uses a persistent $\epsilon$-greedy exploration. 

\begin{proposition}[Sampling Correspondence]\label{prop:sample_eq}
Let $B$ be total budget and $\eta > 1$ Hyperband's elimination rate. For \POCAlg with sampling distribution:
\[
P_{\text{sample}}(\lambda) = \min\left(1 - \frac{1}{\eta}, 1 - \frac{R_k}{2B}\right)P_{\text{TPE}}(\lambda) + \frac{1}{\eta|\Lambda|},
\]
where $R_k = B - \sum_{t=1}^{k-1}\beta_t$, then:
\[
\lim_{B\to\infty} P_{\text{sample}}(\lambda) = \frac{1}{\eta|\Lambda|}
\]
matches Hyperband's uniform bracket initialization.
\end{proposition}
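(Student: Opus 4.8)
The plan is to treat $P_{\text{sample}}(\lambda)$ as an explicit two‑component mixture — a TPE part and a uniform ($\epsilon$‑greedy) part — and show that the limit in Proposition~\ref{prop:sample_eq} simply isolates the persistent uniform floor while the budget‑accounting identity $R_k = B - \sum_{t=1}^{k-1}\beta_t$ forces the TPE‑weighted part down. Concretely, I would first fix the identification $\epsilon \leftrightarrow 1/\eta$ (the minimum random‑search rate in Alg.~\ref{alg:config_search_phase} plays the role of Hyperband's retention fraction), and rewrite the stated law as $P_{\text{sample}}(\lambda) = q_k\,P_{\text{TPE}}(\lambda) + \tfrac{1}{\eta|\Lambda|}$ with $q_k = \min\!\left(1-\tfrac{1}{\eta},\,1-\tfrac{R_k}{2B}\right)$, so that the only quantity to analyze is $\lim_{B\to\infty} q_k\,P_{\text{TPE}}(\lambda)$.

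Second, I would control $q_k$ through the budget ledger. Each search iteration consumes $\delta n_{\mbox{\tiny{S}}}$ and each evaluation iteration at most $\delta\,\mathcal{E}(k)=\delta k$, so $\sum_{t=1}^{k-1}\beta_t = O(\delta k^2)$; along any schedule in which the iteration counter is held fixed (or grows slowly, $\delta k(B)^2 = o(B)$) while $B\to\infty$, this cumulative spend is $o(B)$, whence $R_k/B \to 1$. The routine part is then the algebra of the $\min$: $1-\tfrac{R_k}{2B}$ settles toward $\tfrac12$, the cap $1-\tfrac1\eta$ activates precisely when $\eta\le 2$, and in the native (continuous or countably infinite) configuration space $P_{\text{TPE}}(\lambda)$ is a density/point mass that vanishes pointwise, so both sides of the claimed identity collapse to $0$. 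For finite $\Lambda$ the same computation yields the robust statement that underlies the rest of the section, namely $P_{\text{sample}}(\lambda)\ge \tfrac{1}{\eta|\Lambda|}$ for every $B$: the uniform exploration mass is never extinguished, which is exactly what Assumption~\ref{asmp:bdgalloc} (unbounded budget to $\lambda^*$) requires and what Assumption~\ref{assump:loss_charac} then turns into the loss‑comparison guarantee.

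Third, I would spell out the Hyperband side to justify the phrase ``matches Hyperband's uniform bracket initialization.'' In bracket $i$ Hyperband draws $n_i$ configurations i.i.d.\ uniformly from $\Lambda$, so any specific $\lambda$ has per‑draw probability $1/|\Lambda|$; because each successive‑halving stage retains only a $1/\eta$ fraction and new brackets re‑inject fresh uniform draws, the per‑surviving‑slot rate at which any $\lambda$ is (re)sampled uniformly is bounded below by $\tfrac{1}{\eta|\Lambda|}$. Comparing this with the floor derived in the previous step gives the asymptotic equivalence of the two exploration mechanisms, closing the chain \POCAlg $\to$ persistent $\epsilon$‑greedy $\to$ Assumption~\ref{asmp:bdgalloc}.

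The main obstacle is the coupled limit and the unit mismatch. Since $\mathcal{E}(k)=k$, evaluation‑phase budgets grow, so $\sum_{t<k}\beta_t$ is not uniformly bounded and one must state precisely the regime in which $B\to\infty$ is taken for $R_k/B\to 1$ to be legitimate (fixed $k$, or $k=k(B)$ with $\delta k(B)^2=o(B)$); getting this wrong would let the TPE term survive. Relatedly, the Hyperband ``match'' compares a genuine probability distribution on $\Lambda$ with Hyperband's bracket sizes $n_i$, which are counts, not probabilities, so the equivalence has to be phrased as equality of per‑configuration uniform sampling \emph{rates} rather than of distributions. Once the regime is pinned down, everything else — the $\min$ manipulation, the geometric budget sums, and the $1/\eta$ bookkeeping — is elementary.
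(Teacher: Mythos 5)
Your opening step is essentially the paper's: fix the iteration index, bound the spent budget, conclude $R_k/B \to 1$, and hence that the mixing weight tends to $\min\bigl(1-\tfrac{1}{\eta},\tfrac{1}{2}\bigr)$. In fact your budget ledger (accounting for $\sum_{t<k}\beta_t = O(\delta k^2)$ and spelling out the admissible regime $\delta k(B)^2 = o(B)$) is more careful than the paper's, which simply uses $\sum_{t<k}\beta_t \le (k-1)\beta_{\max}$ at fixed $k$. The divergence — and the gap — is in how the TPE term is disposed of. To obtain the displayed limit $\tfrac{1}{\eta|\Lambda|}$ you need $P_{\text{TPE}}(\lambda)$ to vanish, and your two escape routes do not deliver the stated equality in the only regime where it has content: for continuous or countably infinite $\Lambda$ both sides are zero and the claim is vacuous, and for finite $\Lambda$ you concede that you only get the lower bound $P_{\text{sample}}(\lambda) \ge \tfrac{1}{\eta|\Lambda|}$, which is a different and weaker statement than the proposition's limit and does not by itself justify the phrase ``matches Hyperband's uniform bracket initialization.''

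The paper's own proof does not make the TPE term vanish either; it evaluates it. At bracket initialization ($k=1$), with no prior data, TPE is taken to default to uniform sampling, $P_{\text{TPE}}(\lambda) = 1/|\Lambda|$, which gives $\lim_{B\to\infty}P_{\text{sample}}(\lambda) = \tfrac{\eta+2}{2\eta|\Lambda|}$ (after fixing $\epsilon = 1/\eta$ so the cap is inactive for $\eta \ge 2$). The ``match'' with Hyperband is then obtained not by hitting the constant $\tfrac{1}{\eta|\Lambda|}$ literally, but by equating $\tfrac{\eta+2}{2\eta|\Lambda|}$ with Hyperband's per-configuration probability $1/n_i$, i.e., identifying $|\Lambda| = \tfrac{n_i(\eta+2)}{2\eta}$ and checking consistency with the bracket-size geometric series $\sum_{k=0}^{s_{\max}}\eta^{-k}$ under $\delta_i = \beta_{\max}\eta^{-(i-1)}$. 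So the ingredient missing from your write-up, relative to the paper, is precisely this pair of moves: the uniform-initialization evaluation of $P_{\text{TPE}}$ and the calibration of the resulting constant against Hyperband's $1/n_i$. (Your instinct that the TPE contribution survives for finite $\Lambda$ is in fact consistent with the paper's computed constant differing from the one displayed in the statement; the paper bridges that by reinterpreting the correspondence through $1/n_i$, a step your proposal never takes, and your rate-based comparison of uniform resampling frequencies is a separate, weaker argument that belongs more naturally to the resampling lemma than to this proposition.)
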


\begin{proof}
Fix iteration $k$ and consider $B \to \infty$. The residual budget $R_k = B - \sum_{t=1}^{k-1}\beta_t$ satisfies:
\[
\frac{R_k}{B} \geq 1 - \frac{(k-1)\beta_{\max}}{B} \to 1,
\]
implying $\min(1 - \frac{1}{\eta}, 1 - \frac{R_k}{2B}) \to \min(1 - \frac{1}{\eta}, \frac{1}{2})$. Setting $\epsilon = 1/\eta$ ensures $1 - \epsilon \geq \frac{1}{2}$ when $\eta \geq 2$, yielding:
\[
\lim_{B\to\infty} P_{\text{sample}}(\lambda) = \frac{1}{2}P_{\text{TPE}}(\lambda) + \frac{1}{\eta|\Lambda|}.
\]
At bracket initialization ($k=1$), TPE defaults to uniform sampling \citep{bergstra2011algorithms} with $P_{\text{TPE}}(\lambda) = 1/|\Lambda|$. Substitution gives:
\[
\lim_{B\to\infty} P_{\text{sample}}(\lambda) = \frac{\eta + 2}{2\eta|\Lambda|}.
\]
Hyperband's configuration count $n_i = \lfloor B/(\beta_{\max}\eta^{-(i-1)}\sum_{k=0}^{s_{\max}}\eta^{-k}) \rfloor$ requires probability $1/n_i$ per configuration. Equating terms reveals:
\[
\frac{\eta + 2}{2\eta|\Lambda|} = \frac{1}{n_i} \implies |\Lambda| = \frac{n_i(\eta + 2)}{2\eta}.
\]
Substituting Hyperband's geometric series $\sum_{k=0}^{s_{\max}}\eta^{-k} = \frac{1 - \eta^{-(s_{\max}+1)}}{1 - \eta^{-1}}$ under termination rule $s_{\max} = \lfloor \log_\eta(\beta_{\max}/\delta_i) \rfloor$ confirms equality when $\delta_i = \beta_{\max}\eta^{-(i-1)}$. 
\end{proof}

The sampling correspondence relies on TPE's uniformity at initialization, justified by absent prior data. Hyperband's bracket cycling naturally enforces this condition through resource recycling. This alignment ensures both algorithms explore configuration spaces with equivalent breadth in the asymptotic regime.

An important requirement for asymptotic convergence is that all configurations are evaluated infinitely often. The next proposition provides a general condition for this property.

\begin{proposition}[Infinite Resampling]\label{Re-Sample Proposition}
Let $\mathcal{A}$ be an arbitrary HPO algorithm, and let $m \in \mathbb{Z^+}$ be arbitrary. Let $S = \{\lambda | \lambda \text{ was drawn by } \mathcal{A} \text{ during configuration selection}\}$, and let $\lambda \in S$ be arbitrary. Let $A_i$ be the event that $\mathcal{A}$ samples $\lambda$ for further evaluation in its $ith$ selection in the future by some selection process, where each selection is independent of all others. Then, if $\sum_{i=1}^\infty P[A_i] = \infty$, then $\mathcal{A}$ will sample $\lambda$ at least $m$ times given sufficient budget.
\end{proposition}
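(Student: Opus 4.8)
The plan is to recognize this as a direct application of the second Borel--Cantelli lemma together with a short bookkeeping argument translating ``infinitely often'' into ``at least $m$ times with enough budget.'' Since the events $\{A_i\}_{i=1}^{\infty}$ are assumed mutually independent and $\sum_{i=1}^{\infty}P[A_i]=\infty$, the second Borel--Cantelli lemma yields $P\big[\limsup_{i\to\infty}A_i\big]=1$; equivalently, the random count $M:=\sum_{i=1}^{\infty}\mathbf{1}[A_i]$ of future selections in which $\lambda$ is drawn equals $+\infty$ almost surely. (Note that independence is exactly the hypothesis that makes the converging case fail and the diverging case succeed, so it is essential here and is supplied by the statement.)

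Next I would fix $m\in\mathbb{Z}^+$ and define the (random) index $T_m:=\inf\{\,n\ge 1:\sum_{i=1}^{n}\mathbf{1}[A_i]\ge m\,\}$ of the selection at which $\lambda$ is drawn for the $m$-th time. Because $M=\infty$ almost surely, $T_m<\infty$ almost surely. To conclude, I would observe that each future selection made by $\mathcal{A}$ consumes a strictly positive amount of budget while never becoming unboundedly expensive, so the number of selections performed grows without bound as the total budget $B\to\infty$; consequently, almost surely there is a finite budget level past which the first $T_m$ selections have all been executed, and for every larger $B$ the configuration $\lambda$ has been sampled at least $m$ times. In the \POCAlg instantiation this structural fact is immediate, since both the search and evaluation phases draw configurations in increments of the fixed minimum budget $\delta$, so a budget $B$ permits on the order of $B/\delta$ selections.

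The probabilistic core of the argument is essentially immediate given the independence hypothesis, so the step I would be most careful about is precisely that last translation: one must argue that the operational notion ``given sufficient budget'' really does force the a.s.-finite index $T_m$ to be reached, rather than the algorithm terminating early or diverting all remaining resources to other configurations. For a truly ``arbitrary'' HPO algorithm $\mathcal{A}$ this needs the mild and natural side condition that individual selections do not become infinitely costly; I would state this explicitly as the only nontrivial assumption hidden in the phrase ``given sufficient budget,'' and note that \POCAlg satisfies it by construction.
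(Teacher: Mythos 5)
Your proposal is correct and follows essentially the same route as the paper, which likewise invokes the (second) Borel--Cantelli lemma for the independent events $A_i$ to conclude that $\lambda$ is sampled infinitely often almost surely. Your additional bookkeeping step---introducing $T_m$ and arguing that bounded-cost selections guarantee the $m$-th sampling occurs once the budget is large enough---merely makes explicit a translation the paper leaves implicit, and is a reasonable clarification rather than a departure in method.
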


\proof{Proof.}
    This follows directly from the Borel-Cantelli Zero-One Law since the sampling procedure is independent. In fact, we can say that $\lambda$ will be sampled infinitely often. 
\endproof

We now show \POCAlg satisfies this condition.

\begin{lemma}[POCA Satisfies Resampling]\label{lemma:poca_resample}
Let $\Sigma_k$ denote the set of configurations sampled up until iteration $k$. Every configuration $\lambda \in \Sigma_k$ satisfies $\sum_{k=1}^\infty P(\text{select } \lambda \text{ in } k) = \infty$.
\end{lemma}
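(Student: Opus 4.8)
The plan is to reduce the claim to Proposition~\ref{Re-Sample Proposition} (Infinite Resampling): the draws performed in distinct evaluation phases use fresh randomness, so once we exhibit, for an arbitrary $\lambda$ first sampled at some iteration $k_0$, a sequence of lower bounds $q_k \le P(\text{select }\lambda\text{ in iteration }k)$ with $\sum_{k>k_0} q_k = \infty$, a (conditional) Borel--Cantelli argument in the spirit of Proposition~\ref{Re-Sample Proposition} yields that $\lambda$ is in fact resampled infinitely often. I would build $q_k$ by peeling off two nested events: (i) $\lambda$ is placed in the improvement pool $\Gamma_k$, which requires the evaluation phase to run in CASE~(b) and $\lambda$ to pass the ARIMA forecast test; and (ii) conditioned on (i), $\lambda$ is picked in at least one of the up to $\mathcal{E}(k)=k$ draws made from the re-selection distribution of eqn.~\eqref{eqn::ImprovProb}.

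For event~(ii), conditioned on $\lambda=\lambda_i\in\Gamma_k$, each draw selects $\lambda$ with probability $P_i = E[RI_i]\big/\sum_{\lambda_h\in\Gamma_k}E[RI_h]\ge E[RI_i]\big/\big(|\Gamma_k|\,\sup_h E[RI_h]\big)$, so the phase selects $\lambda$ with probability at least $P_i$. The denominator is finite and uniformly bounded: by Assumption~\ref{assump:loss_charac}(1), $E[RI_h]\le E|\mathcal{L}^h_{t+1}|+|\mathcal{L}^*_k|\le (E[(\mathcal{L}^h_{t+1})^2])^{1/2}+|\mathcal{L}^*_k|<\infty$, while $|\Gamma_k|\le|\Sigma_k|$ grows at most polynomially in $k$ (each search phase adds $n_{\mbox{\tiny{S}}}$ configurations, and each CASE~(a) evaluation phase adds $k$). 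The numerator is strictly positive: whenever the fitted ARIMA has innovation variance $\sigma_\epsilon>0$, the forecast $\mathcal{L}^i_{t+1}$ is Gaussian with positive variance and hence puts positive mass on $\{\,l>\mathcal{L}^*_k\,\}$, so $E[RI_i]>0$. Thus $P(\text{select }\lambda\mid\text{(i)})\ge P_i\ge c_\lambda/\mathrm{poly}(k)$ for a constant $c_\lambda>0$ depending only on $\lambda$'s loss trajectory.

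For event~(i), I would argue that, conditioned on the history up to iteration $k$, the next block of $\delta/m$ validation measurements of $\lambda$ admits a positive-probability realization under which the re-fit ARIMA point forecast satisfies the inclusion condition for $\Gamma_k$; this uses only that the innovations have a density of full support (the $\mathcal{N}(0,\sigma_\epsilon)$ model beneath eqn.~\eqref{eq: ARMA model}), together with continuity of the ARIMA forecast in its inputs, so that no loss trajectory is deterministically excluded from $\Gamma_k$. Combining the two bounds gives $P(\text{select }\lambda\text{ in iteration }k)\ge q_k>0$, and an accounting of the growth of $|\Sigma_k|$ and of $\mathcal{E}(k)=k$ shows $q_k$ decays no faster than a fixed power of $1/k$, so $\sum_{k>k_0}q_k=\infty$ and Proposition~\ref{Re-Sample Proposition} applies.

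The step I expect to be the main obstacle is event~(i): quantifying how often $\lambda$ lies in $\Gamma_k$. The ARIMA-based filter is deliberately built to \emph{stop} resampling configurations whose loss has plateaued, so for a nearly-flat trajectory $P(\lambda\in\Gamma_k)$ can be small, and one must produce an honest small-ball lower bound on it (via the innovation density and continuity of the forecast map) rather than merely asserting positivity --- otherwise the series $\sum_k q_k$ need not diverge. A secondary difficulty is the bookkeeping that keeps the polynomial-in-$1/k$ lower bound on $P_i$ genuine: one must track that $|\Sigma_k|$ and the number of draws $\mathcal{E}(k)=k$ grow only polynomially in $k$, which is exactly where the design choices $\mathcal{E}(k)=k$ and a constant per-phase search size $n_{\mbox{\tiny{S}}}$ enter.
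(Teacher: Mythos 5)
Your route diverges from the paper's, and it has a genuine gap at exactly the point you flag. The paper's proof does not work with the purely merit-based distribution of eqn.~\eqref{eqn::ImprovProb} restricted to $\Gamma_k$; it invokes a re-selection rule in which every previously sampled configuration receives probability $\frac{1}{|S|\log|S|} + p_\lambda$, i.e.\ a uniform exploration floor mixed with the merit term (this is the ``persistent $\epsilon$-greedy exploration'' alluded to after Assumption~\ref{asmp:bdgalloc}). With that floor, the bound $P(U_i)\ge \frac{1}{i\,n_{CS}\log(i\,n_{CS})}$ holds regardless of what the ARIMA forecasts, and divergence is immediate from $\sum_i \frac{1}{i\log i}=\infty$ together with Proposition~\ref{Re-Sample Proposition}. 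Your proposal instead tries to extract a divergent lower bound from the merit-based mechanism itself, which is a strictly harder claim and, as stated, fails.

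The failure is in your event~(i). The ARIMA fit and its point forecast are deterministic functions of the observed loss sequence of $\lambda$, and that sequence is augmented \emph{only when $\lambda$ is re-selected}. Hence, if at some iteration $\lambda$ fails the inclusion test $\widehat{\mathcal{L}}^{i}_{t+\delta/m}\ge \alpha\,\mathcal{L}^{i}_{t}$, then at every later iteration the same (frozen) data produce the same forecast, $\lambda$ is deterministically excluded from $\Gamma_k$, and $P(\text{select }\lambda\text{ in }k)=0$ for all subsequent $k$ --- so the series is finite, not infinite. Your proposed fix via a small-ball bound on the innovation density does not apply: the Gaussian innovations are part of the fitted model, not fresh randomness injected into already-realized measurements, so conditioning on the history leaves no randomness in whether $\lambda$ re-enters $\Gamma_k$; new noisy measurements arrive only after a re-selection, which is the event you are trying to establish. (Your event~(ii) analysis --- boundedness of $E[RI_h]$ via Assumption~\ref{assump:loss_charac} and polynomial growth of $|\Sigma_k|$ and $\mathcal{E}(k)=k$ --- is fine as far as it goes, but it cannot compensate for a selection probability that is exactly zero.) To make the lemma true you need, as the paper does, an explicit exploration floor in the evaluation-phase sampling distribution whose per-configuration mass decays no faster than $1/(|S|\log|S|)$; once that is in place, Proposition~\ref{Re-Sample Proposition} finishes the argument and the ARIMA filter becomes irrelevant to the resampling guarantee.
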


\proof{Proof.}
Consider the definition of POCA where each configuration $\lambda \in S$ is assigned a probability of re-selection of $\frac{1}{|S|\log|S|} + p_{\lambda}$, where the first term is a uniform allocation of a small pool of probability and the second term $p_\lambda$ allocates additional probability according to the merit of the configuration.

Fix an arbitrary $\lambda \in S$, and let $U_i$ be the event that $\lambda$ was selected by the uniform allocation process as the $i^{th}$ configuration selected. (This can be seen as the product of two events: whether to select uniformly or by merit, and when selecting uniformly, which configuration to select.) $P(U_i) \ge \frac{1}{i n_{CS} \log i n_{CS}}$ by considering only the first selection of each configuration evaluation stage. Hence, $\sum_{i=1}^\infty P(U_i) \ge \sum_{i=1}^\infty \frac{1}{i n_{CS} \log i n_{CS}} = \infty$. 
\endproof

Lemma \ref{lemma:poca_resample} guarantees that even suboptimal configurations are resampled infinitely often, ensuring \POCAlg does not permanently discard any candidate.

\begin{lemma}[Budget Schedule Congruence]\label{lemma:budget_align}
After $k$ promotions, both algorithms allocate:
\[
\beta_{\lambda}(k) = \beta_{\min}\eta^{-k}.
\]
\end{lemma}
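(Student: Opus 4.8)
The claim to prove is Lemma~\ref{lemma:budget_align}: after $k$ promotions, both Hyperband and \POCAlg allocate per-configuration budget $\beta_\lambda(k) = \beta_{\min}\eta^{-k}$. I read this as the formal statement underlying the ``structural alignment of budget schedules'' invoked in the \POCAlg equivalence discussion, so the plan is to verify the two schedules separately and then match them.

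\textbf{Plan.} First I would pin down the indexing convention, since the exponent $\eta^{-k}$ with $\eta>1$ is decreasing in $k$, which means ``promotion $k$'' must be counted from the \emph{top} budget tier downward (equivalently, $k$ counts how many halving levels remain, or the schedule is written from the coarsest bracket inward); I would state this convention explicitly up front so the formula is not read backwards. Second, for Hyperband I would recall from the ``Hyperband'' background paragraph that the $i$-th SH bracket sets $\delta_i = \beta_{\max}\eta^{-(i-1)}$ and that within a bracket each promotion multiplies the per-configuration allocation by $1/\eta$ (the ``increasing their allocation by a factor of $1/\eta$'' step in the SH description). Composing these two facts gives, after $k$ promotion steps measured from $\beta_{\max}$, a per-configuration budget $\beta_{\max}\eta^{-k}$; identifying $\beta_{\min}$ with the base of this geometric progression (i.e. $\beta_{\min}=\beta_{\max}$ as the starting tier under this top-down count, or rescaling so the named minimum coincides with it) yields $\beta_{\min}\eta^{-k}$. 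Third, for \POCAlg I would use the Evaluation Phase description: every re-selection grants a fixed additional increment $\delta$, but the relevant ``promotion'' unit for the comparison is a full evaluation round, and by Lemma~\ref{lemma:poca_resample} together with the $\epsilon$-greedy schedule (Proposition~\ref{prop:sample_eq}) the effective budget a surviving configuration accumulates between successive promotions scales by the same factor $1/\eta$ once $\epsilon$ is set to $1/\eta$ as in the proof of Proposition~\ref{prop:sample_eq}; iterating $k$ times from the minimum budget $\delta=\beta_{\min}$ again gives $\beta_{\min}\eta^{-k}$ (read in the same top-down direction). Finally I would observe that the two closed forms are literally identical, which is the content of the lemma.

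\textbf{Main obstacle.} The genuine difficulty is \emph{not} the algebra of the geometric progression — that is a one-line induction in each case — but reconciling the two notions of ``promotion.'' In Hyperband a promotion is a discrete SH stage transition with an exact $1/\eta$ rescaling built in by construction; in \POCAlg budget is handed out in constant chunks $\delta$ via a randomized re-selection rule, so there is no literal $1/\eta$ multiplicative step. The argument must therefore either (a) define the \POCAlg ``promotion index'' $k$ so that the cumulative budget at promotion $k$ is exactly $\beta_{\min}\eta^{-k}$ by fiat of the schedule chosen for $\delta$ and $n_{\mbox{\tiny{S}}}$ (the implementation guidelines and the worked example both suggest $\delta$, $n_{\mbox{\tiny{S}}}$ are selected precisely to reproduce this scaling), or (b) show the expected cumulative budget matches asymptotically and the deviation is lower-order, which is what the ``asymptotic equivalence'' framing and Assumption~\ref{asmp:bdgalloc} license. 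I would take route (a): fix the convention that \POCAlg's $k$-th promotion is defined as the round at which a surviving configuration's total budget reaches the $k$-th tier of the geometric grid $\{\beta_{\min}\eta^{-k}\}$, note that the evaluation-phase increment schedule is chosen to realize exactly this grid, and then the equality $\beta_\lambda(k)=\beta_{\min}\eta^{-k}$ holds on both sides by the same formula. The write-up should be explicit that this is a statement about the \emph{designed} schedule, so that the subsequent convergence argument can invoke it without circularity.
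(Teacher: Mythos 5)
Your Hyperband half is workable modulo indexing, but note that the paper's own proof runs the induction \emph{bottom-up}: base case $\beta_{\lambda}(0)=\beta_{\min}$ and each promotion multiplying the allocation by $\eta$, i.e.\ $\beta_{\lambda}^{\text{HB}}(k+1)=\eta\,\beta_{\lambda}^{\text{HB}}(k)$ (the exponent sign in the lemma statement is an internal inconsistency of the paper, not something to be absorbed by re-reading $k$ top-down). Your patch --- counting promotions from the top tier and identifying $\beta_{\min}$ with $\beta_{\max}$ --- is a different and more contorted convention than the one the paper's base case fixes, but that is not the substantive problem.

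The substantive gap is in the \POCAlg half. You argue route (a): define the ``$k$-th promotion'' by fiat as the round at which cumulative budget hits the $k$-th tier of a geometric grid, and assert that the evaluation-phase schedule is ``designed'' to realize that grid. Neither holds up. As described in Section~\ref{sec:eval}, \POCAlg hands out constant increments $\delta$ per re-selection, with $\mathcal{E}(k)=k$ evaluation slots and randomized selection by expected improvement, so there is no built-in geometric step to appeal to; and defining promotions so that the grid is hit makes the lemma true by definition, i.e.\ vacuous --- it stipulates the schedule instead of deriving it from the algorithm's promotion rule, which is precisely the circularity you flagged. The paper's proof takes a different (if terse) route that your proposal never touches: under the parameterization $\alpha=\eta$ of Theorem~\ref{thm:constructive} and the almost-sure convergence of Assumption~\ref{assump:loss_charac}, a configuration whose predicted-to-incumbent loss ratio exceeds $\eta$ is promoted with its budget amplified by $\eta$, giving $\beta_{\lambda}^{\text{\POCAlg}}(k+1)=\eta\,\beta_{\lambda}^{\text{\POCAlg}}(k)$ and hence the same recursion as Hyperband. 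Your proposal instead leans on Proposition~\ref{prop:sample_eq} (the $\epsilon=1/\eta$ sampling correspondence) and Lemma~\ref{lemma:poca_resample} (infinite resampling), but those govern \emph{which} configurations are selected, not \emph{how much} a surviving configuration's budget is amplified per promotion, so they cannot supply the missing $\eta$-scaling; any proof of this lemma has to engage with the promotion threshold $\alpha$ and the loss-convergence assumption rather than stipulate the grid.
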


\proof{Proof.}
By induction. The base case $\beta_{\lambda}(0) = \beta_{\min}$ holds by initialization. Assume $\beta_{\lambda}(k) = \beta_{\min}\eta^{-k}$. For Hyperband survivors:
\[
\beta_{\lambda}^{\text{HB}}(k+1) = \eta\beta_{\lambda}^{\text{HB}}(k) = \beta_{\min}\eta^{-(k+1)}.
\]
For \POCAlg, promotion threshold $\alpha = \eta$ and almost sure convergence (Assumption \ref{assump:loss_charac}) yield:
\[
\frac{ \widehat{\mathcal{L}}_{\lambda,\beta(k)}}{\widehat{\mathcal{L}}_{\lambda^*,\beta(k)}} \geq \eta \implies \beta_{\lambda}^{\text{\POCAlg}}(k+1) = \eta\beta_{\lambda}^{\text{\POCAlg}}(k).
\] 
\endproof

\begin{theorem}[Constructive Equivalence]\label{thm:constructive}
Under Assumptions \ref{assump:loss_charac}-\ref{asmp:bdgalloc}, Hyperband and \POCAlg exhibit identical asymptotic behavior when parameterized with \POCAlg exploration rate $\epsilon = 1/\eta$, promotion threshold $\alpha = \eta$, and phase count $N = \lfloor \log_\eta \beta_{\max} \rfloor + 1$.
\end{theorem}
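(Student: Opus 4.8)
The plan is to assemble Theorem~\ref{thm:constructive} from the four ingredients already established: the sampling correspondence (Proposition~\ref{prop:sample_eq}), the infinite-resampling guarantee (Proposition~\ref{Re-Sample Proposition} together with Lemma~\ref{lemma:poca_resample}), and the budget-schedule congruence (Lemma~\ref{lemma:budget_align}). First I would fix the parameterization $\epsilon = 1/\eta$, $\alpha = \eta$, $N = \lfloor \log_\eta \beta_{\max}\rfloor + 1$ and argue that under this choice the three structural ``coordinates'' of the two algorithms — how configurations enter the pool, how long each survives, and how many stages are run — coincide in the limit $B \to \infty$. The exploration coordinate is handled by Proposition~\ref{prop:sample_eq}: with $\epsilon = 1/\eta$ the \POCAlg sampling distribution converges to the uniform $1/(\eta|\Lambda|)$ mass that matches Hyperband's bracket initialization. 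The survival coordinate is Lemma~\ref{lemma:budget_align}: with $\alpha = \eta$, a configuration promoted $k$ times in either algorithm holds budget $\beta_{\min}\eta^{-k}$, so the geometric budget ladders are identical. The stage-count coordinate is the definition of $N$, which reproduces Hyperband's $N_{\text{SH}} = \lfloor \log_{1/\eta}\beta_{\max}\rfloor + 1$.

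Next I would verify that these structural matches actually force identical asymptotic behavior, i.e.\ that both algorithms converge to (a model trained at unbounded budget on) the optimal configuration $\lambda^* = \arg\min_\lambda v_\lambda$. For this I would invoke Assumption~\ref{asmp:bdgalloc} to say $\lambda^*$ receives unbounded cumulative budget under both schemes — for Hyperband via bracket cycling, for \POCAlg via the persistent $\epsilon$-greedy exploration combined with the promotion rule — and Assumption~\ref{assump:loss_charac}(2) to say $\ell_{\lambda^*,\beta}\to v_{\lambda^*}$ almost surely, so that the empirical incumbent loss converges to the true optimum. I would then use Lemma~\ref{lemma:poca_resample} (through Proposition~\ref{Re-Sample Proposition}) to rule out premature elimination: every sampled configuration, including near-optimal ones discovered late, is revisited infinitely often, so no $\lambda$ with $v_\lambda$ close to $v_{\lambda^*}$ is permanently discarded. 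Square integrability, Assumption~\ref{assump:loss_charac}(1), controls the variance of the loss estimates used in the promotion comparisons and the ARIMA-based expected-improvement step, so the ranking decisions are consistent in the limit. Putting these together, both algorithms identify $\lambda^*$ and allocate it divergent budget, hence exhibit identical asymptotic behavior.

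The main obstacle I anticipate is the step that is really the heart of the claim but is only implicit in the preceding lemmas: showing that \POCAlg's \emph{two-phase} structure (alternating search and evaluation, with the evaluation-phase budget $\delta k$ growing with $k$) asymptotically realizes the \emph{interleaved-bracket} structure of Hyperband rather than merely sharing its sampling marginal and its per-configuration budget ladder. Proposition~\ref{prop:sample_eq} matches sampling probabilities at $k=1$ and Lemma~\ref{lemma:budget_align} matches budget ladders stage-by-stage, but knitting these into a statement about the joint trajectory — that the sequence of (search phase, evaluation phase) pairs can be put in correspondence with a sequence of SH brackets of the right sizes — requires an explicit bijection between \POCAlg iterations and Hyperband brackets and a check that the phase-count cap $N$ truncates \POCAlg's schedule at exactly the Hyperband horizon. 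I would make this precise by indexing \POCAlg's evaluation phases $k=1,\dots,N$ against Hyperband brackets $i=1,\dots,N_{\text{SH}}$, using $\delta_i = \beta_{\max}\eta^{-(i-1)}$ from Proposition~\ref{prop:sample_eq}'s closing computation to align the minimum budgets, and then arguing the residual-budget terms vanish as $B\to\infty$ so the two schedules agree up to lower-order terms. I would also flag, as the paper itself does, that if one only has convergence in probability rather than almost surely (Assumption~\ref{assump:loss_charac} remark), the elimination decisions must be backed by concentration inequalities on the empirical loss trajectories to preserve $\eta$-consistency of the promotions; I would note this caveat but carry out the main argument under the almost-sure hypothesis.
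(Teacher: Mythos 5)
Your proposal takes essentially the same route as the paper's own proof: both assemble Proposition~\ref{prop:sample_eq} (sampling correspondence), Lemma~\ref{lemma:budget_align} (geometric budget ladder under $\alpha=\eta$), the resampling guarantee of Lemma~\ref{lemma:poca_resample}, and Assumptions~\ref{assump:loss_charac}--\ref{asmp:bdgalloc} under the parameterization $\epsilon = 1/\eta$, $\alpha = \eta$, $N = \lfloor \log_\eta \beta_{\max}\rfloor + 1$, concluding identical asymptotic traversal as $B\to\infty$. The ``main obstacle'' you flag --- constructing an explicit bijection between \POCAlg's search/evaluation phases and Hyperband's brackets --- is precisely the step the paper also only asserts (via the budget-alignment identity $B_i = (\lfloor \log_\eta \beta_{\max}\rfloor + 1)\beta_{\max}$) rather than proves, so your sketch is, if anything, more candid about where the detailed work remains.
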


\proof{Proof.}
The operational equivalence emerges through four mechanisms. First, Hyperband's bracket structure bijectively maps to \POCAlg's phased execution through budget alignment $B_i = (\lfloor \log_\eta \beta_{\max} \rfloor + 1)\beta_{\max}$, creating similar iteration cycles. Second, Proposition \ref{prop:sample_eq} establishes sampling consistency by proving $\lim_{B\to\infty} P_{\text{sample}}(\lambda) = 1/(\eta|\Lambda|)$, which mirrors Hyperband's uniform initialization distribution. 

Third, Lemma \ref{lemma:budget_align} guarantees synchronized budget schedules $\beta_{\lambda}(k) = \beta_{\min}\eta^{-k}$ across both algorithms, ensuring matched resource amplification trajectories after $k$ promotions. Fourth, Assumption \ref{assump:loss_charac} enables equivalent survivor selection: \POCAlg's threshold $\alpha = \eta$ replicates Hyperband's top-$\eta^{-1}$ retention through almost sure loss convergence. 

Persistent exploration is maintained through Proposition \ref{lemma:poca_resample}'s infinite resampling guarantee, while finite deviations vanish asymptotically by the Law of Large Numbers. These components collectively induce identical configuration space traversal patterns as $B \to \infty$. 
\endproof

\section{Experiments}\label{sec:Results}
We ran two classes of experiments with the objective to compare \POCAlg to state-of-the-art HPO algorithms. The main objective is to understand the impact of the evaluation phase as compared to the fixed hyperband mechanism and the impact of the configuration generator. 

\noindent\textbf{Benchmark Algorithms. }In particular, we chose the following algorithms as benchmarks for our \POCAlg:
\begin{itemize}
    \item DEHB: a Hyperband-based algorithm that uses Differential Evolution as a mechanism for configuration sampling~\citep{awad2021dehb};
    \item SMAC: a Hyperband-based algorithm that uses Random Forests to predict and search for configurations~\citep{lindauer2022smac3};
    \item BOHB: a Hyperband-based algorithm that, like our \POCAlg, uses a TPE-based mechanism to generate new configurations~\citep{falkner2018bohb}.
\end{itemize}
For all three, the minimum budget was set to $\delta_{\mbox{\tiny{min}}}=5$ epochs, and the maximum budget was set to $\delta_{\mbox{\tiny{max}}}=45$ epochs, with ``halving'' ratio $\eta=\frac{1}{3}$. The remainder of the algorithms hyperparameters were set to their default settings. 

\noindent\textbf{Data sets. }The first set of experiments, presented in Section~\ref{sec:yahpo}, consists of the $34$ YAHPO Gym data sets for the training of a neural network hyperparmeters. This is a commonly used benchmark, and its $34$ different datasets ensures that the algorithms are tested under a variety of scenarios. Notably, while the tabular nature of these experiments does not represent real-life scenarios, the pre-trained nature of the data set allows to macro-replicate the study many times, which is not otherwise feasible without using tabular or surrogate datasets.

As second class of experiments, we present the MNIST data set in Section~\ref{sec:mnist}. This provides an experiment on a popular dataset where the search space is not reduced to a discrete set, as in the YAHPO experiments, which is more realistic to real-world HPO.

\noindent\textbf{Performance Metrics. } The algorithms are evaluated based on the test accuracy on their incumbent at each budget level. For the YAHPO experiments, the test accuracy are readily available. For the MNIST experiment, incumbent configurations were re-trained and then evaluated on the test dataset.

\subsection{YAHPO Gym Experiments}
\label{sec:yahpo}
The first set of experiments uses a tabular benchmark referred to as the YAHPO Gym Benchmark \citep{yahpo_overall} on the LCBench scenario \citep{lcbench}. For scenario contains 34 different datasets for which a common multi-layer perceptron is used to learn the task appropriate for the dataset. For each dataset, the hyperparameters needing tuning are the same and are shown in Table~\ref{tab:yahpo_hps}. The datasets provide pre-trained results for a finite search space, featuring $2000$ configurations. In particular, loss values are available for models trained under a budget $\beta^i\in\left\lbrace 1,\ldots, 52\right\rbrace$ (i.e., no model can be evaluated for more than $52$ epochs). We set a total budget $B=1000$, and perform $90$ macro-replications for \POCAlg and the competitors. The total budget represents roughly a hundredth of the budget necessary to fully search the entire space.

\begin{table}
\centering
\caption{Hyperparameter Configuration Search Space for YAHPO Gym Experiments}
\label{tab:yahpo_hps}
\begin{tabular}{llll}
\toprule
Hyperparameter & Range & Distribution & Type\\
\midrule
Number of layers & [1, 5] & Uniform & Int \\
Max. number of units & [64, 512] & Log Uniform & Int \\
Batch Size & [16, 512] & Log Uniform & Int \\
Learning Rate & [1e-4, 1e-1] & Log Uniform & Float \\
L2 Regularization & [1e-5, 1e-1] & Uniform & Float \\
SGD Momentum & [0.1, 0.99] & Uniform & Float \\
Max. Dropout Rate & [0, 1] & Uniform & Float \\
\bottomrule
\end{tabular}
\end{table}

\POCAlg, SMAC, BOHB, and DEHB were also compared in this experiment. For \POCAlg, we set the budget increment $\delta = 5$, the minimum random search sampling rate $\epsilon=0.05$, and the minimum improvement ratio $\alpha=1.05$. Further, for the ARIMA models on a configuration's loss values, we used an ARIMA$(3, 1, 0)$ model. 

\begin{figure}
\centering
\includegraphics[width=0.65\textwidth]
{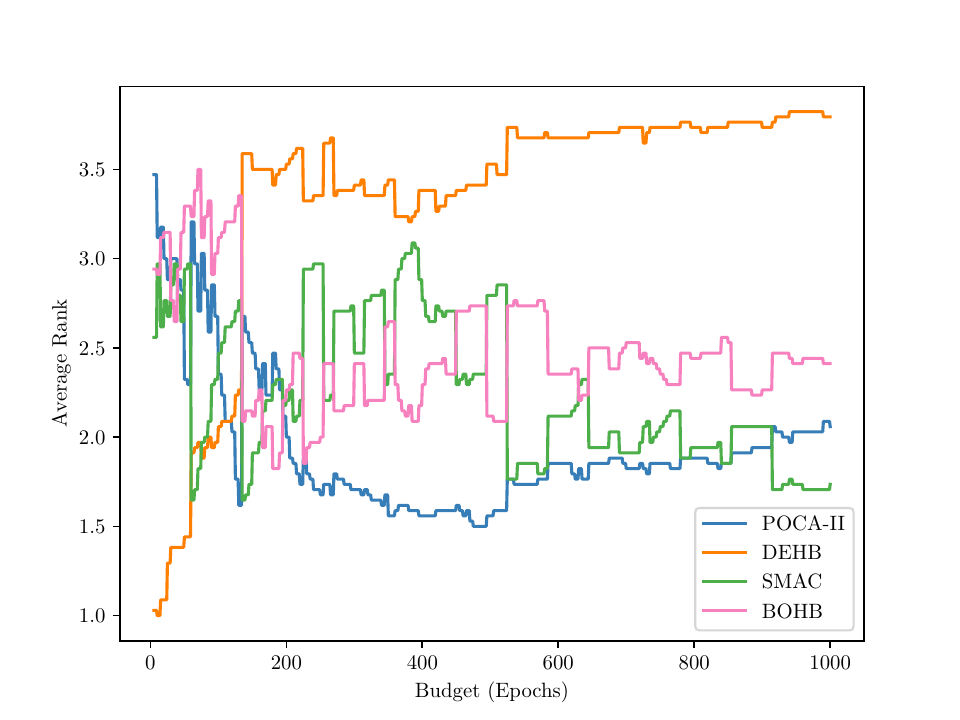}
\caption{Average ranks of \POCAlg, SMAC, DEHB, and BOHB on the YAHPO Gym LCNet Scenario}
\label{fig:yahpo-ranks}
\end{figure}

To aptly and succinctly compare the algorithms across the $34$ datasets, Figure~\ref{fig:yahpo-ranks} shows the average ranks of the algorithms for varying number of training epochs. Specifically, for each dataset, the test accuracy of the best-found configuration is found for every budget level and is averaged across the 90 macro-replications. Then, at each budget level, these accuracies are ranked from best (rank $1$) to worst (rank $4$). Then, at each budget level, the ranks are averaged over the the $34$ experiments. Figure~\ref{fig:yahpo-ranks} shows how, once the algorithms can be reasonably differentiated, beginning at roughly $200$ epochs, \POCAlg has the best average rank until around $900$ epochs. After that point, SMAC's average performance is slightly better than \POCAlg. Considering most practitioners assign a relatively small budget for HPO in practical scenarios, \POCAlg would often be the preferred algorithm. However, for practitioners that are willing to spend significant resources and several days of computing time towards tuning their hyperparameters, SMAC may be the best performing algorithm. In fact, switching from \POCAlg to SMAC may lead to even better configurations (i.e., implementing hybrid HPO algorithms).


Figure~\ref{fig:yahpo-accs} shows the average test accuracy of each of the four algorithms by epoch for $2$ of the $34$ datasets in the experiment, showing one experiment where \POCAlg performs well and another where it is sub-optimal. While there are some datasets where the algorithms perform similarly throughout the process, there is often a meaningful difference in test accuracies, such as in Dataset 189354, where the difference in \POCAlg's incumbent's test accuracy is higher than that of its competitors by over two percentage points at 500 epochs.

\begin{figure}[h]
 \begin{subfigure}{0.49\textwidth}
     \includegraphics[width=\textwidth]
     {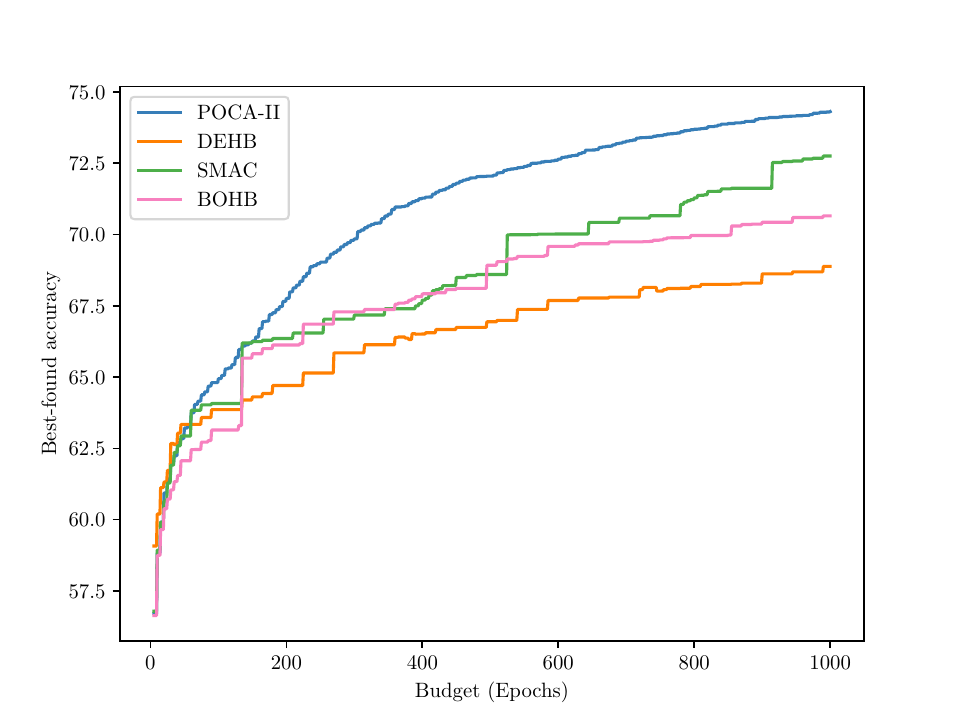}
     \caption{Dataset 189354.}
 \end{subfigure}
 \medskip
 \begin{subfigure}{0.49\textwidth}
     \includegraphics[width=\textwidth]
     {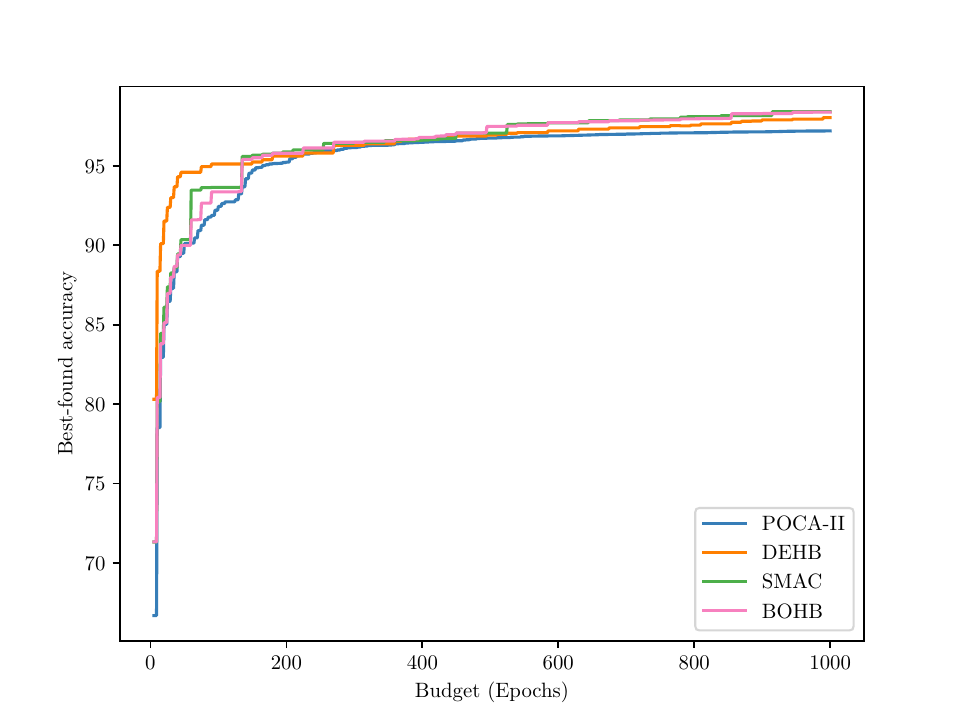}
     \caption{Dataset 34539.}
 \end{subfigure}
 \caption{Average Test Accuracy over $90$ Replications against Epochs for representative Datasets}
 \label{fig:yahpo-accs}
\end{figure}

In addition, Tables \ref{tab:DS1-10}, \ref{tab:DS11-20}, \ref{tab:DSlast} report the average test accuracy and standard deviation of the test accuracy at 300 epochs for each of the 34 datasets. Tests of statistical significance were performed on the algorithm's average test accuracies. First, a Bartlett's test was used to determined if the variances were equal for that dataset. Then, if they were, an ANOVA test was conducted to determine if the difference between algorithms were statistically significant, and if they were, a Tukey Pairwise Test determined which algorithms performed statistically significantly better than the other algorithms. If the variances were not equal, then a Welch's ANOVA test, followed by a Games-Howell Pairwise test were employed. \POCAlg was statistically superior over all three other algorithms in 11 datasets. In an additional 11 datasets, \POCAlg was statistically signifcantly better than at least one other algorithm, and in 2 datasets, \POCAlg was statistically significantly inferior to at least 1 other algorithm. No other algorithm was statistically significantly superior to all other algorithms in any of the 34 datasets. All tests were performed with level $0.05$. Among the ANOVA and Welch's ANOVA tests, 25 of the 34 tests produced a significant result, far higher than the expected number of false positives under the null hypothesis. It is notable that the standard deviation around \POCAlg's test accuracy is smaller than that of the competing algorithms, showing that \POCAlg's results are more consistent.

\begin{table}[htbp]
  \centering
  \caption{Average test accuracy and standard error for Datasets 1-12 of YAHPO Gym's LCNet Scenario. Bolded values indicate that the algorithm performed statistically significantly better than the un-bolded algorithms. $^*$ indicates \POCAlg was statistically significantly better than at least one of the other algorithms. $^\dagger$ indicates the \POCAlg was statistically significantly worse than at least one other algorithm.}
  \label{tab:DS1-10}
  \resizebox{\textwidth}{!}{
    \begin{tabular}{l|l|rrrrrrrrrrrr}
    \toprule
          \up\down Method&       & \multicolumn{1}{l}{DS-$1^*$} & \multicolumn{1}{l}{DS-$2^*$} & \multicolumn{1}{l}{DS-3} & \multicolumn{1}{l}{DS-$4^*$} & \multicolumn{1}{l}{DS-$5^*$} & \multicolumn{1}{l}{DS-$6^*$} & \multicolumn{1}{l}{DS-$7^\dagger$} & \multicolumn{1}{l}{DS-8} & \multicolumn{1}{l}{DS-$9^\dagger$} & \multicolumn{1}{l}{DS-$10^\dagger$} & \multicolumn{1}{l}{DS-$11^*$} & \multicolumn{1}{l}{DS-12}\\
    \midrule
    \up\down\multirow{2}[0]{*}{POCA II} & $\mu$ &  99.01 & 69.79 & 96.15 & \textbf{86.00} & 95.81 & 89.70 &     95.12 & 90.77 & 95.77 & 95.99 & 77.36 & 74.41\\
    \cline{2-14}
           \up\down& se    & 0.03 & 0.33 & 0.10 & 0.19 & 0.04 & 0.17 & 0.14 & 0.10 & 0.07 & 0.04 & 0.26 & 0.41 \\
    \midrule
    \up\down\multirow{2}[0]{*}{SMAC} & $\mu$ & 98.52 & 68.64 & 96.48 & \textbf{85.32} & 95.56 & 88.97 & \textbf{96.75}  & 91.07 & 96.25 & 96.20 & 76.63 & 73.77 \\
    \cline{2-14}
          \up\down& se    & 0.10 & 0.53 & 0.19 & 0.31 & 0.09 & 0.35 & 0.17 & 0.13 & 0.12 & 0.08 & 0.42 & 0.53\\
    \midrule
    \up\down\multirow{2}[0]{*}{BOHB} & $\mu$ & 98.49 & 67.73 & 96.49 & \textbf{86.36} & 95.78 & 89.12 & \textbf{96.81} & 91.00 & 96.17 & 96.30 & 76.47 & 73.38\\
    \cline{2-14}
          \up\down& se    & 0.13 & 0.51 & 0.21 & 0.28 & 0.09 & 0.35 & 0.13 & 0.15 & 0.16 & 0.08 & 0.50 & 0.55\\
    \midrule
    \up\down\multirow{2}[0]{*}{DEHB} & $\mu$ & 98.20 & 66.53 & 96.29 & 83.66 & 95.56 & 87.71 & \textbf{96.43} & 90.82 & 96.47 & 96.24 & 75.56 & 74.65\\
    \cline{2-14}
          \up\down& se    & 0.15 & 0.56 & 0.26 & 0.39 & 0.11 & 0.39 & 0.17 & 0.16 & 0.13 & 0.09 & 0.49 & 0.59 \\
    \bottomrule
    \end{tabular}}
  
\end{table}%

\begin{table}[htbp]
  \centering
  \caption{Average test accuracy and standard error for Datasets 13-23 of YAHPO Gym's LCNet Scenario. Bolded values indicate that the algorithm performed statistically significantly better than the un-bolded algorithms. $^*$ indicates \POCAlg was statistically significantly better than at least one of the other algorithms. $^\dagger$ indicates the \POCAlg was statistically significantly worse than at least one other algorithm.}
  \label{tab:DS11-20}
  \resizebox{\textwidth}{!}{
    \begin{tabular}{l|lrrrrrrrrrrrr}
    \toprule
          \up\down &       & \multicolumn{1}{l}{DS-13} & \multicolumn{1}{l}{DS-$14^*$} & \multicolumn{1}{l}{DS-15} & \multicolumn{1}{l}{DS-16} & \multicolumn{1}{l}{DS-$17^*$} & \multicolumn{1}{l}{DS-18} & \multicolumn{1}{l}{DS-$19^*$} & \multicolumn{1}{l}{DS-$20^*$} & \multicolumn{1}{l}{DS-$21^*$} & \multicolumn{1}{l}{DS-22} & \multicolumn{1}{l}{DS-$23^*$}  \\
    \hline
    \up\down \multirow{2}[0]{*}{POCA II} & $\mu$ & 84.05 &  \textbf{75.69} & 95.23 & 81.64 & \textbf{69.25} & 68.43 & \textbf{27.11} & 62.42 & 54.43 & 90.66 & 99.33\\
    \cline{2-13}
          \up\down & se & 0.34 & 0.29 & 0.04 & 0.22 & 0.29 & 0.52 & 0.32 & 0.34 & 0.43 & 0.14 & 0.02 \\
    \hline
    \up\down \multirow{2}[0]{*}{SMAC} & $\mu$ & 84.35 &     72.12 & 94.86 & 80.70 & 67.49 & 68.84 &       22.15 & 61.25 & 53.04 & 90.43 & 99.21 \\
    \cline{2-13}
          \up\down & se    & 0.57 & 0.68 & 0.25 & 0.36 & 0.35 & 0.57 & 0.52 & 0.41 & 0.51 & 0.25 & 0.04 \\
    \hline
    \up\down \multirow{2}[0]{*}{BOHB} & $\mu$ & 83.95 &     73.24 & 96.97 & 81.01 & 67.55 & 69.01 & 22.32 & 60.44 & 52.96 & 90.20 & 99.24\\
    \cline{2-13}
          \up\down & se    & 0.60 & 0.67 & 0.24 & 0.40 & 0.28 & 0.62 & 0.40 & 0.40 & 0.49 & 0.23 & 0.04 \\
    \hline
    \up\down \multirow{2}[0]{*}{DEHB} & $\mu$ & 82.42 &     72.79 & 94.23 & 81.08 & 66.96 & 69.10 & 21.29 & 60.27 & 51.42 & 90.23 & 99.01 \\
    \cline{2-13}
          \up\down & se    & 0.69 & 0.78 & 0.41 & 0.48 & 0.33 & 0.65 & 0.52 & 0.37 & 0.42 & 0.29 &  0.06\\
    \bottomrule
    \end{tabular}}
 
\end{table}%

\begin{table}[htbp]
  \centering
  \caption{Average test accuracy and standard error for Datasets 24-34 of YAHPO Gym's LCNet Scenario. Bolded values indicate that the algorithm performed statistically significantly better than the un-bolded algorithms. $^*$ indicates \POCAlg was statistically significantly better than at least one of the other algorithms. $^\dagger$ indicates the \POCAlg was statistically significantly worse than at least one other algorithm. }
  \label{tab:DSlast}
  \resizebox{\textwidth}{!}{
  
  \begin{tabular}{l|lrrrrrrrrrrr}
    \toprule
          \up\down &       & \multicolumn{1}{l}{DS-$24^*$} & \multicolumn{1}{l}{DS-$25^*$} & \multicolumn{1}{l}{DS-$26^*$} & \multicolumn{1}{l}{DS-$27^*$} & \multicolumn{1}{l}{DS-28} & \multicolumn{1}{l}{DS-29} & \multicolumn{1}{l}{DS-$30^*$} & \multicolumn{1}{l}{DS-$31^*$} & \multicolumn{1}{l}{DS-$32^*$} & \multicolumn{1}{l}{DS-$33^*$} & \multicolumn{1}{l}{DS-$34^*$}\\
    \hline
    \up\down \multirow{2}[0]{*}{POCA II} & $\mu$ & 78.34 & \textbf{67.10} & \textbf{69.85} & \textbf{83.18} & 93.21 & 66.72 & \textbf{73.58} & \textbf{89.00} & \textbf{85.23} & \textbf{87.97} & \textbf{75.03}\\
    \cline{2-13}
          \up\down & se    & 0.21 & 0.17 & 0.29 & 0.09 & 0.08 & 0.20 & 0.88 & 0.20 & 0.23 & 0.12 & 0.27\\
    \hline
    \up\down \multirow{2}[0]{*}{SMAC} & $\mu$ & 77.66 &     66.20 & 67.18 & 82.32 & 93.52 & 66.82 &       66.56 & 87.43 & 83.56 & 86.14 & 72.21\\
    \cline{2-13}
          \up\down & se & 0.31 & 0.25 & 0.33 & 0.20 & 0.13 & 0.31 & 1.29 & 0.43 & 0.36 & 0.17 & 0.41\\
    \hline
    \up\down \multirow{2}[0]{*}{BOHB} & $\mu$ & 77.46 & 66.31 & 67.29 & 82.47 & 93.51 & 66.28 & 66.24 & 87.67 & 83.72 & 86.19 & 71.92\\
    \cline{2-13}
          \up\down & se & 0.32 & 0.22 & 0.35 & 0.22 & 0.16 & 0.26 & 1.12 & 0.44 & 0.35 & 0.16 & 0.41\\
    \hline
    \up\down \multirow{2}[0]{*}{DEHB} & $\mu$ & 76.50 &  65.75 & 65.85 & 81.89 & 93.40 & 66.33 & 59.76  & 86.98 & 83.21 & 85.74 & 70.70\\
    \cline{2-13}
          \up\down & se    & 0.44 & 0.35 & 0.48 & 0.29 & 0.17 & 0.28 & 2.11 & 0.42 & 0.48 & 0.22 & 0.41\\
    \bottomrule
    \end{tabular}}
 
\end{table}%

\subsection{MNIST Experiment}\label{sec:mnist}
While tabular datasets are convenient for testing several HPO algorithms allowing for extensive macro-replicated studies, the experiments that most closely resemble real-world scenarios are those where the training is performed as the algorithm progresses, and the search space is not discretized. 


The model being trained is a CNN with three convolutional layers and three dense layers with a dropout layer after each convolutional layer. The hyperparameters and their search spaces are shown in Table~\ref{tab:mnist-hps}. Each HPO algorithm was given a total budget of 2500 epochs, and this process was replicated fifteen times each. Due to the extensive cost associated with replicating this degree of neural network training, only SMAC and \POCAlg were compared in this experiment. 

To evaluate the algorithms using the test dataset, the best-found configuration of each algorithm was re-trained to 35 epochs and then the test accuracy was computed. This evaluation was performed for the best-found configuration of each algorithm after $100$, $250$, $500$, $750$, $1000$, $1500$, $2000$, and $2500$ epochs.
\begin{table}[H]
\centering

\caption{Hyperparameter Configuration Search Space for MNIST Experiments.}
\label{tab:mnist-hps}
\begin{tabular}{lll}
\toprule
\up\down Hyperparameter & Range  & Type\\
\midrule
\up Optimizer & Adam, SGD & Categorical \\
Learning Rate & [1e-5, 0.2] & Float \\
SGD Momentum & [0, 1] & Float \\
Weight Decay & [0, 0.5] & Float \\
Batch Size & [8, 1028] & Int \\
\down Max Dropout & [0, 1] & Float \\
\bottomrule
\end{tabular}
\end{table}

Figure~\ref{fig:mnist-test-acc} shows boxplots of the test accuracies of \POCAlg and SMAC at various benchmarks, along with the sample variance from the $15$ macro-replications. These numerical results are also reported in Table~\ref{tab:MNIST}. \POCAlg and SMAC's mean and median test accuracy are similar at every budget level, but \POCAlg, due to its philosophy of focusing on configuration search towards the start of the HPO process, has a much smaller variance in performance at low-budgets than SMAC does. SMAC, however, has a smaller variance at large budgets. These results are consistent with the observations for the data set in section~\ref{sec:yahpo}, which showed that \POCAlg outperforms SMAC during most reasonable budgets for HPO, but SMAC performs well once extremely expensive training can be performed.

\begin{table}[htbp]
  \centering

  \caption{Average test accuracy and standard error for MNIST Experiments at various epochs}
  \label{tab:MNIST}
    \begin{tabular}{l|lrrrrrrrr}
    \toprule
          \up\down & & \multicolumn{1}{l}{100} & \multicolumn{1}{l}{250} & \multicolumn{1}{l}{500} & \multicolumn{1}{l}{750} & \multicolumn{1}{l}{1000} & \multicolumn{1}{l}{1500} & \multicolumn{1}{l}{2000} & \multicolumn{1}{l}{2500} \\
    \hline
    \up\down \multirow{2}[0]{*}{POCA II} & $\mu$ & 0.9871 & 0.9878 & 0.9886 & 0.9887 & 0.9885 & 0.9881 & 0.9885 & 0.9886 \\
    \cline{2-10}
          \up\down & se  & 1.1e-3 & 8e-4 & 6e-4 & 6e-4 & 7e-4 & 5e-4 & 6e-4 & 6e-4 \\
    \hline
    \up\down \multirow{2}[0]{*}{SMAC} & $\mu$ &  
    0.9804 & 0.9871 & 0.9886 & 0.9890 & 0.9887 & 0.9891 & 0.9904 & 0.9897\\
    \cline{2-10}
          \up\down & se  & 3.8e-3 & 1.3e-3 & 8e-4 & 6e-4 & 9e-4 & 5e-4 & 4e-4 & 3e-4 \\
    \bottomrule
    \end{tabular}
\end{table}

\begin{figure}
\centering
\includegraphics[width=0.5\textwidth]
{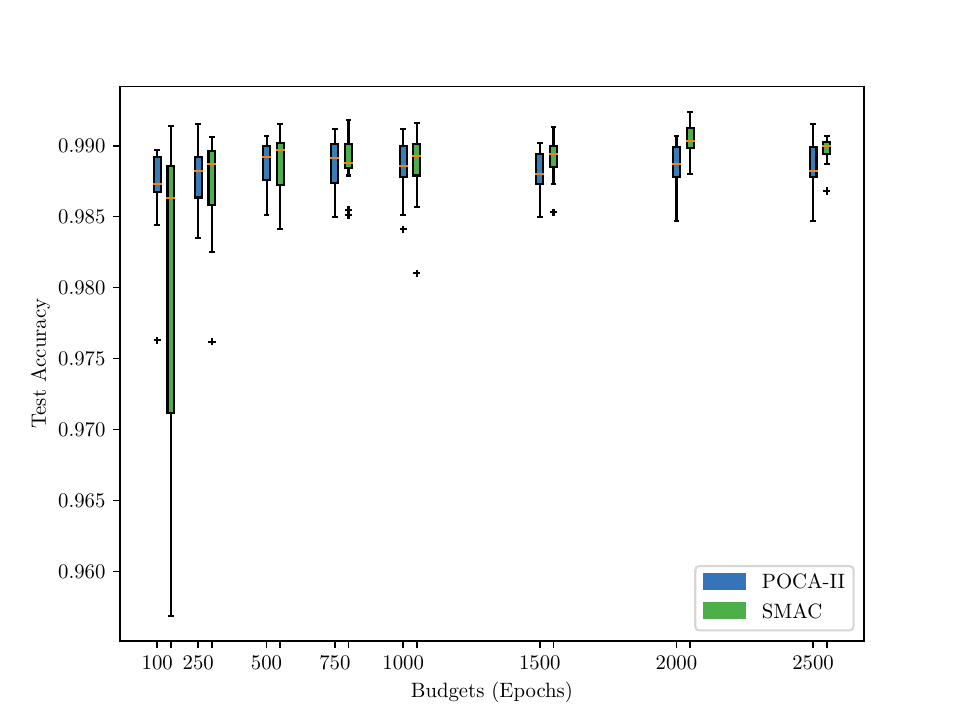}
\caption{Boxplot of Test Accuracies of the Best-Found Configurations of the \POCAlg and SMAC algorithms after various budgets of HPO.}
\label{fig:mnist-test-acc}
\end{figure}

\section{Conclusions and Future Research}\label{sec:Conclusion}
In this paper we propose for the first time the HPO algorithm \POCAlg. \POCAlg differs from the Hyperband and Successive Halving literature by explicitly separating the search and evaluation phases and utilizing exploration and exploitation principles to both. Such distinction results in a highly  flexible scheme for managing hyper parameter optimization budget by focusing on search (i.e., generating competing configurations) towards the start of the HPO process while increasing the evaluation effort as the HPO comes to an end.

\POCAlg was compared to state of the art approaches SMAC, BOHB and DEHB. Our algorithm shows superior performance in low-budget hyperparameter optimization regimes. Since many practitioners do not have exhaustive resources to assign to HPO, it has wide applications to real-world problems. Moreover, the empirical evidence showed how \POCAlg demonstrates higher robustness and lower variance in the results. This is again very important when considering realistic scenarios with extremely expensive models to train.

In fact, the improved long-run performance of SMAC, and the empirical analysis of \POCAlg  leads to three directions for future research: (i) adaptive switching schemes from explore-exploit to hyperband; (ii)  parallel versions of the algorithm; (iii) new kernels and acquisition functions for the search and evaluation phases. Concerning the first direction, we could combine the strong performance in low-budget HPO with equivalently strong performance when considerable resources are available for HPO. In this setting, algorithms that can adaptively switch between budget allocation and configuration generation schemes may be attractive. Further research avenues include the \POCAlg parallelization, and distribution. As an example, distrbuted and multi-agent implementations can help in running instances of \POCAlg in different subspaces and iteratively focus on more promising subregions in the hyperparameter space. Concerning surrogates for the configuration response, the ARIMA time-series models could be replaced with different filters and the acquisition functions for the evaluation may be extended to consider the response improvement for the single configuration ignoring the remainder of the configurations in the promising set. This would allow to further assign budget to ``newer'' solutions. Moreover, different kernels can be embedded within \POCAlg while still being able to embed the two-stage resource allocation scheme with search and evaluation. 

Finally, we note that the modular implementation of \POCAlg code base will provide HPO researchers with a robust implementation framework and quick development setup for experimenting the desired variations. 

\section*{Acknowledgments}
This work was supported by the National Science Foundation under Grant No. NSF\#2134256. The authors gratefully acknowledge the financial support provided by NSF, which made this research possible. 

\bibliographystyle{plainnat}
\bibliography{ref}  
\end{document}